\documentclass{article}            
\PassOptionsToPackage{numbers,compress}{natbib}

 \usepackage[main, final]{neurips_2025}
\usepackage[utf8]{inputenc}
\usepackage[T1]{fontenc}
\usepackage{textcomp}
\usepackage{microtype}
\usepackage{wrapfig}
\usepackage{booktabs}
\usepackage{makecell}

\usepackage{amsmath,amssymb,amsthm}
\usepackage{bm}

\usepackage{graphicx}
\usepackage[dvipsnames]{xcolor}
\usepackage{tikz}
\usepackage{pgfplots}
\pgfplotsset{compat=1.18}

\usepackage{booktabs}
\usepackage{multirow}
\usepackage{longtable}
\usepackage{subcaption}
\usepackage[font=small,labelfont=bf]{caption}
\usepackage{wrapfig}

\usepackage{algorithm}
\usepackage{algorithmic}
\usepackage{listings}
\usepackage{xcolor}

\lstdefinestyle{mypython}{
    language=Python,
    basicstyle=\ttfamily\small,      
    keywordstyle=\color{blue}\ttfamily,   stringstyle=\color{red!60!black}\ttfamily, 
    commentstyle=\color{green!50!black}\ttfamily,  
    otherkeywords={self, torch, F},   
    morekeywords={@, __init__},      
    frame=tb,                        
    framerule=0.5pt,                 
    columns=fullflexible,            
    keepspaces=true,                 
    showstringspaces=false,          
    breaklines=true,                 
    tabsize=2,                       
    backgroundcolor=\color{gray!5},  
    captionpos=b                     
}
\usepackage{listings}
\definecolor{codegreen}{rgb}{0,0.6,0}
\definecolor{codegray}{rgb}{0.5,0.5,0.5}
\definecolor{codepurple}{rgb}{0.58,0,0.82}
\definecolor{backcolour}{rgb}{0.95,0.95,0.92}
\lstset{
  language=Python,
  backgroundcolor=\color{backcolour},
  commentstyle=\color{codegreen},
  keywordstyle=\color{magenta},
  numberstyle=\tiny\color{codegray},
  stringstyle=\color{codepurple},
  basicstyle=\ttfamily\small,
  breaklines=true,
  numbers=left,
  numbersep=5pt,
  showstringspaces=false,
  tabsize=2,
  captionpos=b
}

\usepackage{dsfont}

\DeclareMathOperator{\strictLower}{strictLower}

\usepackage{url}
\usepackage{nicefrac}
\usepackage{fontawesome}
\usepackage{enumitem}
\usepackage{lineno}
\usepackage{tcolorbox}

\definecolor{darkblue}{rgb}{0,0,0.5}
\usepackage{hyperref}
\hypersetup{
  colorlinks=true,
  linkcolor=darkblue,
  citecolor=darkblue,
  urlcolor=darkblue
}

\DeclareMathOperator{\tril}{lower}

\newcommand{\mat}[1]{\mathbf{#1}}
\newcommand{\vecbold}[1]{\mathbf{#1}}


\newcommand\blfootnote[1]{%
  \begingroup
  \renewcommand\thefootnote{}\footnote{#1}%
  \addtocounter{footnote}{-1}%
  \endgroup
}


\newtheorem{theorem}{Theorem}[section]

\setlength{\emergencystretch}{3em}

\usepackage{amsmath, amssymb, amsfonts, mathtools}
\usepackage{physics}
\usepackage{cancel}
\usepackage{thmtools, thm-restate}
\usepackage{accents}
\usepackage{bm}



\makeatletter
\newcommand{\ostar}{\mathbin{\mathpalette\make@circled *}}
\newcommand{\make@circled}[2]{%
  \ooalign{$\m@th#1\smallbigcirc{#1}$\cr\hidewidth$\m@th#1#2$\hidewidth\cr}%
}
\newcommand{\smallbigcirc}[1]{%
  \vcenter{\hbox{\scalebox{0.77778}{$\m@th#1\bigcirc$}}}%
}
\makeatother

\usepackage{pict2e, picture}
\makeatletter
\DeclareRobustCommand{\Arrow}[1][]{%
\check@mathfonts
\if\relax\detokenize{#1}\relax
\settowidth{\dimen@}{$\m@th\rightarrow$}%
\else
\setlength{\dimen@}{#1}%
\fi
\sbox\z@{\usefont{U}{lasy}{m}{n}\symbol{41}}%
\begin{picture}(\dimen@,\ht\z@)
\roundcap
\put(\dimexpr\dimen@-.7\wd\z@,0){\usebox\z@}
\put(0,\fontdimen22\textfont2){\line(1,0){\dimen@}}
\end{picture}%
}
\makeatother

\DeclareMathAlphabet{\nummathbb}{U}{BOONDOX-ds}{m}{n}

\newcommand{\1}{\nummathbb{1}}

\makeatletter
\DeclareRobustCommand\widecheck[1]{{\mathpalette\@widecheck{#1}}}
\def\@widecheck#1#2{%
    \setbox\z@\hbox{\m@th$#1#2$}%
    \setbox\tw@\hbox{\m@th$#1%
       \widehat{%
          \vrule\@width\z@\@height\ht\z@
          \vrule\@height\z@\@width\wd\z@}$}%
    \dp\tw@-\ht\z@
    \@tempdima\ht\z@ \advance\@tempdima2\ht\tw@ \divide\@tempdima\thr@@
    \setbox\tw@\hbox{%
       \raise\@tempdima\hbox{\scalebox{1}[-1]{\lower\@tempdima\box
\tw@}}}%
    {\ooalign{\box\tw@ \cr \box\z@}}}
\makeatother

\definecolor{bred}{RGB}{250, 82, 82}
\definecolor{borange}{RGB}{253, 126, 20}
\definecolor{byellow}{RGB}{250, 176, 5}
\definecolor{bgreen}{RGB}{116, 184, 22}
\definecolor{bblue}{RGB}{250, 176, 5}
\definecolor{bindigo}{RGB}{76, 110, 245}
\definecolor{bcyan}{RGB}{59, 201, 219}
\definecolor{bteal}{RGB}{99, 230, 190}










\def\eqref#1{equation~\ref{#1}}









\def\1{\bm{1}}




\def\rve{{\mathbf{e}}}

\def\rvh{{\mathbf{h}}}
\def\rvu{{\mathbf{i}}}

\def\rvk{{\mathbf{k}}}

\def\rvo{{\mathbf{o}}}

\def\rvq{{\mathbf{q}}}

\def\rvu{{\mathbf{u}}}
\def\rvv{{\mathbf{v}}}
\def\rvw{{\mathbf{w}}}
\def\rvx{{\mathbf{x}}}



\def\rmS{{\mathbf{S}}}



















\title{PaTH Attention: Position Encoding via \\ Accumulating Householder Transformations}
\author{
\textbf{Songlin Yang}$^{1}$ \quad \textbf{Yikang Shen}$^{2}$ \quad \textbf{Kaiyue Wen}$^{3}$ \quad \textbf{Shawn Tan}$^{2}$ \\
\textbf{Mayank Mishra}$^{2}$ \quad \textbf{Liliang Ren}$^{4}$ \quad \textbf{Rameswar Panda}$^{2}$ \quad \textbf{Yoon Kim}$^{1}$ \vspace{2mm} \\ 
$^{1}$Massachusetts Institute of Technology \quad
$^{2}$MIT-IBM Watson AI Lab \\
$^{3}$Stanford University \quad
$^{4}$Microsoft \vspace{2mm}\\ 
\texttt{yangsl66@mit.edu}
}

\begin{document}
\maketitle

\vspace{-6mm}
\begin{abstract}
\vspace{-2mm}
    The attention mechanism is a core primitive in modern large language models (LLMs) and AI more broadly. Since attention by itself is permutation-invariant, position encoding is essential for modeling structured domains such as language. Rotary position encoding (RoPE) has emerged as the de facto standard approach for position encoding and is part of many modern LLMs. However, in RoPE the key/query transformation between two elements in a sequence is only a function of their relative position and otherwise independent of the actual input. This limits the expressivity of RoPE-based transformers.
   This paper describes PaTH, a flexible data-dependent \textbf{p}osition encoding scheme based on \textbf{a}ccumulated products of  \textbf{H}ouseholder(like) \textbf{t}ransformations, where each transformation is data-dependent, i.e., a function of the input. We derive an efficient parallel  algorithm for training through exploiting a compact representation of products of Householder matrices, and implement a FlashAttention-style blockwise  algorithm. Across both targeted synthetic benchmarks  and moderate-scale real-world language modeling experiments, we find that PaTH improves upon RoPE and other recent baselines. Finally, we show that we can convert pretrained RoPE transformers into PaTH with continued pretraining.
   \blfootnote{
      \noindent \hspace{-6mm}
The implementation of the PaTH attention layer is also made available as part of the \textsc{FlashLinearAttention} library \cite{yang_gated_2023,yang_fla_2024}: \url{https://github.com/fla-org/flash-linear-attention}
   }

\end{abstract}

\vspace{-4mm}
\section{Introduction}
\vspace{-2mm}
Attention mechanisms form the backbone of transformer architectures that power contemporary AI systems. Attention is inherently permutation-invariant, and thus encoding positional information into attention is important for effective sequence modeling. Since the original sinusoidal embeddings \citep{vaswani2023attentionneed}, various position encoding schemes have been proposed over the years \citep[][\emph{inter alia}]{devlin2019bert,raffel2020exploring,ke2020rethinking,huang2020improve,liutkus2021relative,alibi, rope}; see \citet{dufter2022position} for a comprehensive survey.  Among these, rotary position embedding \citep[RoPE;][]{rope} has emerged as the de facto standard,  adopted in most recent state-of-the-art LLMs.

RoPE works by transforming the key ($\rvk_j$) and query ($\rvq_i$) embeddings  through a  rotation matrix $\mathbf{R}$  whose rotation angle is a function of the difference in positions, resulting in the bilinear form $\rvq_i^\top \mathbf{R}^{i-j} \rvk_j$ for the attention logits. The rotation matrix $\mathbf{R}$ itself is a block-diagonal matrix composed of two-by-two rotation matrices, which enables efficient  computation. However,  the rotation matrix in RoPE is \emph{data-independent} and only a function of the relative position (i.e., $\mathbf{R}$ applied $i-j$ times), which limits its expressivity;  indeed, recent work \citep{chen2024circuitcomplexityboundsropebased}  demonstrates that   RoPE-based transformers are still computationally constrained to the $\mathsf{TC}^0$ complexity class, the complexity class of ordinary transformers with absolute position embeddings \citep{merrill2023expressive}.  As a potential consequence, RoPE-based transformers have been empirically found to have difficulty with simple synthetic tasks that require a form of sequential reasoning, such as flip-flop language modeling \citep{liu2023exposing} and certain state-tracking tasks \citep{merrill_illusion_2024}. Insofar as  such simple sequential reasoning  underlie real-world capabilities that we want in our LLMs, these failure modes  highlight the need to design new primitives that can overcome these theoretical and empirical limitations of existing attention layers.

This work develops PaTH, a \textbf{p}osition encoding scheme with \textbf{a}ccumulated \textbf{H}ouseholder \textbf{t}ransformations, targeting the above problem. In PaTH, the attention logit is still parameterized as a bilinear form   $\mathbf{q}_i^\top \mathbf{H}_{ij} \mathbf{k}_j$,
but the matrix $\mathbf{H}_{ij} \in \mathbb{R}^{d \times d}$ is obtained via a cumulative product of \emph{data-dependent}  matrices along the path between positions $j$ and $i$, where the matrices have Householder-like identity-plus-rank-one structure. Intuitively, this formulation captures the cumulative transformation between positions, enabling PaTH to dynamically adapt to input data and solve certain state-tracking problems. Indeed, we show that  a constant-layer PaTH-based transformer can solve an $\mathsf{NC}^1$-complete problem under $\mathsf{AC}^0$ reductions, i.e., PaTH can extend transformers beyond the $\mathsf{TC}^0$ complexity class (assuming $\mathsf{TC}^0 \ne \mathsf{NC}^1$). 

To scale up PaTH Attention, we develop a FlashAttention-like algorithm~\citep{dao2022flashattention} for hardware-efficient parallel training that leverages a compact representation of products of Householder matrices~\citep{bischof_wy_1985,Joffrain2006AccumulatingHT}.
Empirical results show that PaTH-based models can solve challenging synthetic state-tracking tasks where RoPE-based Transformers struggle.
On moderate-scale language modeling with 760M-parameter Transformers, PaTH outperforms both RoPE and the Forgetting Transformer~\citep{fox}, which modulates attention logits via a data-dependent additive term.
Combining PaTH with the Forgetting Transformer yields further gains, and the resulting models generalize well beyond the training sequence length. Finally, we show that we can convert pretrained RoPE transformers into PaTH with continued pretraining.
\vspace{-2mm}
\section{PaTH Attention}
\vspace{-2mm}
PaTH employs a dynamic {data-dependent} transition matrix---in particular identity-plus-rank-one Householder-like transformations---for  computing the bilinear attention logits, unlike RoPE which applies a fixed transformation at each time step.

\vspace{-1mm}
\subsection{Generalizing RoPE with Multiplicative Position Encodings}
\vspace{-1mm}

Traditional additive position encodings, such as sinusoidal embeddings \citep{vaswani2023attentionneed} or ALiBi \citep{alibi}, represent positions as vectors or matrices summed directly with token embeddings or attention logits. RoPE instead encodes relative positions multiplicatively rather than additively by directly modulating the key/query vectors via position-dependent transformations. The class of multiplicative positional encodings can more generally  be defined as  $\mathbf{A}_{ij}$ such that, 
\vspace{-2mm}
{\small
\begin{align*}
\mathbf{A}_{ij} \propto \exp \Bigl( \mathbf{k}_j^\top \Bigl(\prod_{s=j+1}^i \mathbf{H}_s \Bigr) \mathbf{q}_i \Bigr),
\end{align*}
}
where \(i\) and \(j\) are positions of the query and key, and $\mathbf{H}_s \in \mathbb{R}^{d\times d}$ is a \emph{transition matrix}. 
RoPE is thus a special case of the above with a static transition matrix $\mathbf{H}_s = \mathbf{R}$, where $\mathbf{R}$ is a block diagonal with $d/2$ independent 2-dimensional rotation blocks, each of which has different rotation angles. This static rotation structure allows for efficient computation of RoPE-based attention in practice.  

\vspace{-1mm}
\subsection{Data-dependent Multiplicative Position Encodings with PaTH}
\vspace{-1mm}
\label{sec:path-theory}
PaTH  employs a \emph{data-dependent}  Householder-like\footnote{Householder matrices take the form $\mathbf{I} - \frac{2}{\Vert \rvu \Vert_2^2} \rvu \rvu^{\top}$ and hence our matrix is only Householder-like.}  matrix with identity-plus rank-one-structure:
\[
\mathbf{H}_t = \mathbf{I} - \beta_t \mathbf{w}_t \mathbf{w}_t^T,
\]
where  $\mathbf{w}_t \in \mathbb{R}^{d}$  and $\beta_t = 2\times\operatorname{sigmoid}(\mathbf{u}^\top\mathbf{x}_t + b) \in (0, 2)$ are functions of the current input $\rvx_t$.\footnote{We use $\beta_t \in (0,2)$ as this  allows for negative eigenvalues in the transition matrix \cite{grazzi2025unlocking}, which has been shown to boost the state tracking performance in the DeltaNet case \cite{grazzi2025unlocking,deltaproduct}. The vector $\mathbf{w}_t$ is obtained by applying a low-rank linear layer followed by a short convolution layer (filter size 3) and an $L_2$ normalization layer. Hence PaTH only adds a small number of additional parameters.} We motivate this parameterization from the perspective of generalizing expressive linear RNNs.

Concretely, consider linear attention transformers with matrix-valued hidden states $\mathbf{S}_{t} \in \mathbb{R}^{d \times d}$ with the above Householder-like transition function, where the output ($\rvo_t$) given the key ($\rvk_t$), query ($\rvq_t$), value ($\rvv_t$) vectors is given by
\begin{align*}
    \mathbf{S}_t =  \mathbf{S}_{t-1} \mathbf{H}_t + \rvv_t \rvk_{t}^\top, && \rvo_t = \rmS_{t} \rvq_t . 
\end{align*}
Recent works have shown that such linear RNNs 
empirically achieve good performance on language modeling \citep{schlag_linear_2021,yang2024deltanet,yang2025gated}. And despite being more efficient than softmax attention, these models have  been shown to be (in a certain way) more expressive than  transformers \citep{grazzi2025unlocking,deltaproduct}, in particular being able to solve  a class of \emph{state tracking} problems  that cannot be solved by ordinary transformers. Now consider unrolling the recurrence in the RNN, and compare it against  the PaTH-attention output,
{\small
\begin{align*}
   \text{RNN: }  \rvo_t = \sum_{j=1}^t \rvv_j \left(\rvk_j^\top \left(\prod_{s=j+1}^t \mathbf{H}_s\right)\rvq_t\right), \hspace{2mm} \text{PaTH: }   \rvo_t = \frac{1} {Z_t}\sum_{j=1}^t  \rvv_j \exp\left(\rvk_j^\top \left(\prod_{s=j+1}^t \mathbf{H}_s\right) \rvq_t \right) ,
\end{align*}
}
where $Z_t = \sum_{j=1}^t \exp\left(\rvk_j^\top \left(\prod_{s=j+1}^t \mathbf{H}_s\right) \rvq_t \right)$ is the normalizer. This view shows that PaTH is closely related to  such expressive linear RNNs, 
and we thus expect PaTH-based transformers to inherit their increased expressivity. Indeed, the following theorem shows that PaTH can extend transformers beyond the $\mathsf{TC}^0$ complexity class.

\begin{restatable}{theorem}{path}
\label{thm:nc1}
A one-layer PaTH transformer with two attention heads and $\log n$ precision can solve an $\mathsf{NC}^1$-complete problem 
under $\mathsf{AC}^0$-reductions.
\end{restatable}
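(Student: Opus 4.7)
The plan is to reduce an $\mathsf{NC}^1$-complete problem to what a one-layer, two-head PaTH transformer can compute under $\mathsf{AC}^0$ reductions, adapting the strategy used for DeltaNet-style recurrences by \citet{grazzi2025unlocking} to the attention setting. The natural target is the word problem over a non-solvable group such as $S_5$ or $A_5$, which is $\mathsf{NC}^1$-complete under $\mathsf{AC}^0$ reductions by Barrington's theorem. Given an input $g_1, \ldots, g_n \in S_5$, the PaTH transformer should decide whether $g_1 g_2 \cdots g_n = e$.

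The first key step is an encoding lemma. Embed $S_5 \hookrightarrow O(5)$ via permutation matrices; by the Householder decomposition, every orthogonal matrix factors into at most $5$ Householder reflections. Hence an $\mathsf{AC}^0$ circuit can map the input word to a length-$O(n)$ sequence of parameters $(\mathbf{w}_s, \beta_s)$ with $\prod_s (\mathbf{I} - \beta_s \mathbf{w}_s \mathbf{w}_s^\top) = g_1 g_2 \cdots g_n$ in the permutation representation, since each group element produces a constant-size block of parameters drawn from a fixed finite table.

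Next, I would design the one-layer PaTH transformer. Input embeddings at position $s$ deterministically realize the prescribed $(\mathbf{w}_s, \beta_s)$; in addition, a constant number of ``probe'' positions (e.g., a BOS token at $0$ and a readout token at the end) carry fixed keys, queries, and values that serve as basis vectors for reading off the accumulated product $M = \prod_s \mathbf{H}_s$. At the readout position, the two attention heads are configured so that (i) one head uses a sharp query, enabled by $\log n$ precision via large logit scaling, to concentrate softmax mass on a probe token, so that the bilinear form $\mathbf{k}_0^\top M \mathbf{q}$ exposes a specific entry $M_{ij}$ inside the attention logit; and (ii) the second head, with position-invariant logits on a parallel ``dummy'' channel, effectively provides the softmax normalizer, so that an affine combination of the two heads' outputs cancels the normalization and recovers $M_{ij}$ up to $1/\mathrm{poly}(n)$ precision. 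Iterating over a constant number of probe/query pairs and applying $\mathsf{AC}^0$ post-processing, the readout determines whether $M$ equals the identity matrix.

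The main obstacle is that PaTH attention returns a softmax-weighted value rather than the matrix–vector product $M \mathbf{v}$ directly, so the construction must engineer the queries, keys, and values so that $M$ appears inside the attention logits in a linearly extractable way; this is precisely why two heads are needed, with the second head absorbing the softmax normalizer. A secondary technical wrinkle is that PaTH restricts $\beta_t \in (0, 2)$, whereas exact Householder reflections require $\beta = 2$; under $\log n$ precision, I would pick sigmoid preactivations so that $\beta_t$ is within $1/\mathrm{poly}(n)$ of $2$, and a standard telescoping argument bounds the multiplicative error in $M$ by $1/\mathrm{poly}(n)$, which is absorbed by $\mathsf{AC}^0$ rounding at readout. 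The final decision is therefore correct on every input.
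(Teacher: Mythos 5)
Your high-level framing does match the paper's: reduce to the $S_5$/permutation-composition word problem (the paper uses iterated swaps of 5 elements, to which iterated $S_5$ multiplication $\mathsf{AC}^0$-reduces), realize each transposition as a Householder-like transition so that the accumulated product $M=\prod_s \mathbf{H}_s$ carries the group element, and pair one content head with one data-independent head at readout. The gap is in the readout. In a one-layer transformer with exactly two heads, the readout position sees only two softmax-weighted value vectors; you cannot ``iterate over a constant number of probe/query pairs'' to recover several entries $M_{ij}$ --- that would need more heads or more layers, and if you instead plant several probe keys in the sequence their logits are all mixed inside a single softmax, so the individual bilinear forms are no longer separately accessible (the readout query is one vector, so probe $j$ yields $(M\mathbf{q})_j$, not $M_{jj}$). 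Moreover, the claim that an affine combination of the two heads ``cancels the normalization and recovers $M_{ij}$ up to $1/\mathrm{poly}(n)$'' is not correct: the first head outputs roughly $\exp(s_0)/(\exp(s_0)+n)$ and the second outputs $1/(n+1)$; an affine combination of these gives a \emph{threshold comparison} (the sign of $s_0$), not the value of the logit, because the softmax cannot be inverted by the linear output projection.

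What the paper supplies, and your sketch is missing, is a device that collapses the entire identity test into a single scalar logit with a margin: take $\mathbf{k}[\#]=\mathbf{e}_1+2\mathbf{e}_2+\cdots+5\mathbf{e}_5-\mathbf{e}_6$ and $\mathbf{q}=n(\mathbf{e}_1+2\mathbf{e}_2+\cdots+5\mathbf{e}_5+54.5\,\mathbf{e}_6)$, so that the only nonzero logit equals $n\bigl(\sum_i i\,\pi(i)-54.5\bigr)$; by the rearrangement inequality this exceeds $0.5n$ iff $\pi$ is the identity and is below $-0.5n$ otherwise, the second head's constant $1/(n+1)$ serves purely as the comparison reference, and a sign-type FFN finishes. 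Without such a single-functional test (or substantially more heads), your construction as stated does not fit the one-layer, two-head budget. Your peripheral points are fine: encoding each group element by a constant block of transpositions is an $\mathsf{AC}^0$ reduction, and your observation that $\beta=2$ is only approached within $(0,2)$ --- handled by a steep sigmoid plus a $1/\mathrm{poly}(n)$ perturbation bound --- is a legitimate refinement of the paper's simply setting $\beta=2$.
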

The proof, given in appendix~\ref{app:proof}, is a straightforward adaptation of Theorem 2 from~\citet{peng2025rwkv7gooseexpressivedynamic}, which showed the that linear RNNs with a similar data-dependent transition matrix can solve an $\mathsf{NC}^1$-complete problem. However, such RNNs still have theoretical limitations that attention does not have, for example in its (in)ability to perform associative recall over a given context of arbitrary length \citep{arora_simple_2024}. In contrast, PaTH  can capture the benefits of both softmax attention (associative recall) and expressive linear RNNs (state tracking).

\vspace{-1mm}
 \paragraph{Extension: PaTH-FoX.}
PaTH simply provides a more expressive way to encode unnormalized attention logits and is thus compatible with other recently proposed modifications to softmax attention such as Stick-Breaking Attention \citep{stick-breaking}, Selective Attention \citep{selective_attention}, and Forgetting Transformer  \citep[FoX;][]{fox}. As a case study we experiment with combining  PaTH with FoX, which \emph{additively} modifies the attention logits in a data-dependent manner. We show that this combined strategy leads to improved performance on some downstream tasks, especially in length extrapolation. 

Concretely, FoX \cite{fox} modifies the attention via data-dependent ``forget'' gates $f_s \in (0,1)$
{\small
\begin{align*}
\mathbf{A}_{ij} \propto \exp(\mathbf{k}_j^\top \mathbf{q}_i + \sum_{s = j+1}^i \log {f}_{s}) = \left(\prod_{s=j+1}^i f_s\right) \exp(\mathbf{k}_j^\top \mathbf{q}_i), 
 \end{align*}}
where $ f_{s} = \operatorname{sigmoid}(\mathbf{u}_f^\top \rvx_s + b_f$). Similar to how PaTH can be seen as a softmax version of DeltaNet-style linear RNNs \citep{schlag2021lineartransformerssecretlyfast,yang2024parallelizing}, FoX can be seen as softmax version of GLA-/Mamba2-style linear RNNs \citep{yang_gated_2023,dao2024transformers}.\footnote{However, this analogy is not quite as crisp in the Mamba2-FoX case.  Mamba2 uses the recurrence $\rmS_{t} = f_t \rmS_{t-1} + \rvv_t \rvk_t^\top$, and unrolling this would give $\rvo_t = \sum_{j=1}^t \rvv_j \left(\prod_{s=j+1}^t f_s\right) \rvk_j^\top\rvq_t$. Applying softmax on this would give $\rvo_t = \frac{1}{Z_t} \sum_{j=1}^t  \rvv_j \exp\left(\left(\prod_{s=j+1}^t f_s\right) \rvk_j^\top\rvq_t\right) $, which is different from  FoX where the  $\prod_{s=j+1}^t f_s$ term is {outside} the exponential function. In preliminary experiments we found this softmax version of Mamba2 to greatly underperform FoX.} We can combine the two mechanisms to arrive at PaTH-FoX attention:
{\small
\begin{align*} \mathbf{A}_{ij} \propto \left(\prod_{s=j+1}^i f_s\right) \exp\left(\mathbf{k}_j^\top \left(\prod_{s=j+1}^i \mathbf{H}_s\right) \mathbf{q}_i\right). \end{align*}
}
We found this variant to be quite effective on  language modeling, reminiscent of the improvements observed by combining DeltaNet with Mamba2 \citep[Gated DeltaNet;][]{yang2025gated} in the linear attention case.

\vspace{-2mm}
\section{Efficient Training and Inference for PaTH Attention}
\vspace{-2mm}
\label{sec:blockwise}
Efficient kernels for attention \citep{dao2022flashattention, flashattention2,shah_flashattention-3_2024} work by operating on subblocks of query and key matrices to avoid  materialization of the full attention matrix in slower DRAM. Unlike in RoPE however, the  cumulative products $\prod_s \mathbf{H}_s$ in PaTH are a function of the input and thus it is not clear whether PaTH-attention computations can similarly be decomposed into computations over subblocks. We now describe how the cumulative product of Householder\footnote{We hereon abuse terminology and use ``Householder'' to refer to  our Householder-like transformations.} transformations can be efficiently computed  using a compact  representation of Householder products \citep{Joffrain2006AccumulatingHT} and applied in a blockwise fashion \citep{tomas_dominguez_fast_2018, fasth, mathiasen_what_2020, yang2024parallelizing} to derive a  FlashAttention-like  algorithm that integrates blockwise Householder transformations with blockwise attention computations.

\vspace{-1mm}
 \subsection{Background \& Notation}
 \vspace{-1mm}
We denote the block size along the sequence length dimension as $B$ and define subblocks using the notation $\mathbf{A}_{[i],[j]} := \mathbf{A}_{iB:(i+1)B,jB:(j+1)B} \in \mathbb{R}^{B \times B}$. This notation extends analogously to the other blocks $\mathbf{X}_{[i]}:= \mathbf{X}_{iB:(i+1)B,:} \in \mathbb{R}^{B \times d}$  for $\mathbf{X} \in \{\mathbf{Q}, \mathbf{K}, \mathbf{V}, \mathbf{W}, \mathbf{O}\}$, where (for example) $\mathbf{W}_{[i]}$ is obtained from the vectors $\mathbf{w}_{iB}, \dots,\mathbf{w}_{(i+1)B}$ in the Householder transformations.

  \vspace{-1mm}
\paragraph{FlashAttention.} FlashAttention uses the online softmax trick \citep{milakov2018onlinenormalizercalculationsoftmax,rabe_self-attention_2022} to compute the output matrix $\mathbf{O}$ block by block. For each query block $i$ it sequentially process the key/value blocks $j$ from $0$ to $i$, computing and accumulating the output as follows:
{\small
\begin{align*}
\mathbf{A}_{[i],[j]} \propto \begin{cases}
\exp(\mathbf{Q}_{[i]}\mathbf{K}_{[j]}^\top), & \text{if } i < j \\
\exp(\operatorname{lower}(\mathbf{Q}_{[i]} \mathbf{K}_{[i]}^\top)), & \text{if } i = j
\end{cases} \quad\in \mathbb{R}^{B \times B},  && \mathbf{O}_{[i]} = \sum_{j=0}^i \mathbf{A}_{[i],[j]}\mathbf{V}_{[j]} \in \mathbb{R}^{B \times d}.
\end{align*}
}
The attention submatrices $\mathbf{A}_{[i], [j]}$ are computed and processed entirely within SRAM, eliminating the need to write them to slower  DRAM, which greatly reduces I/O costs and results in wallclock-speedups. Our algorithm also performs  computations of the output block by block, but takes into account the additional contributions from the data-dependent Householder transformations.

  \vspace{-1mm}
\paragraph{UT transform for products of Householder matrices.}
A major challenge in computing PaTH attention lies in handling products of Householder matrices. We adopt the \emph{UT transform} \citep{Joffrain2006AccumulatingHT} to address this efficiently. For a sequence of $L$ transformations $\mat{H}_t = \mat{I} - \beta_t \vecbold{w}_t \vecbold{w}_t^\top$, their product can be compactly expressed as:
{\small
\begin{align*}
    \mat{P} := \prod_{t=0}^{L-1} \mat{H}_t &= \mat{I} - \mat{W}^\top \mat{T}^{-1} \mat{W} &&\in \mathbb{R}^{d\times d}, \\
    \text{where } \quad \mat{T}^{-1} &:= \left(\mat{I} + \operatorname{strictLower}(\mat{D}  \mat{W} \mat{W}^\top)\right)^{-1} \mat{D} &&\in \mathbb{R}^{L \times L}.
\end{align*}
}
Here, $\mat{W} = [\vecbold{w}_0, \ldots, \vecbold{w}_{L-1}]^\top \in \mathbb{R}^{L\times d}$. $\mat{D} = \operatorname{diag}([\beta_0, \ldots, \beta_{L-1}]) \in \mathbb{R}^{L\times L}$. 
We abuse notation for $\mat{T}^{-1}$ here for incorporating $\mat{D}$ to avoid notational clutter. The UT representation is efficient on modern hardware due to its  use of triangular solves and matrix products \citep{tomas_dominguez_fast_2018}, and is often preferred over alternatives such as the WY transform \citep{bischof_wy_1985, schreiber1989storage}.

\subsection{Full Matrix Form of PaTH Attention} 
\label{sec:matrix_form_path_attention}
Recall that in PaTH attention, the attention score is given by $
\mat{A}_{ij} \propto \exp \left( \mathbf{k}_j^\top \left(\prod_{t=j+1}^{i} \mat{H}_t \right) \mathbf{q}_i \right)$, 
which involves a cumulative product over arbitrary intervals \([j+1, i]\). A naïve implementation would require recomputing the UT transform for each such interval, which is computationally intractable. However, we show that it is possible to \emph{reuse} the global matrix inverse \(\mathbf{T}^{-1}\) and apply simple masking to efficiently extract the product over any subinterval.

To represent the product over an interval \(\prod_{t=s_0}^{e_0} \mat{H}_t\) (with start index \(s_0\) and end index \(e_0\)), we use the \emph{masked UT transform}:
\[
\prod_{t=s_0}^{e_0} \mat{H}_t = \mat{I} - (\mat{W} \odot \mat{M}_{s_0}^L)^\top \mat{T}^{-1} (\mat{W} \odot \mat{M}_{e_0}^R),
\]
where \(\odot\) denotes element-wise multiplication. The binary masks \(\mat{M}_{s_0}^L, \mat{M}_{e_0}^R \in \mathbb{R}^{L \times d}\) are defined entrywise as:
\[
(\mat{M}_{s_0}^L)_{k,c} =
\begin{cases}
1 & \text{if } k \ge s_0, \\
0 & \text{otherwise},
\end{cases}
\quad
(\mat{M}_{e_0}^R)_{k,c} =
\begin{cases}
1 & \text{if } k \le e_0, \\
0 & \text{otherwise}.
\end{cases}
\]
Then, we have:
\begin{align*}
\widetilde{\mat{A}}_{ij} &= \vecbold{k}_j^\top \left( \prod_{t=j+1}^{i} \mat{H}_t \right) \vecbold{q}_i 
= \vecbold{k}_j^\top \vecbold{q}_i - \vecbold{k}_j^\top (\mat{W} \odot \mat{M}_{j+1}^L)^\top \mat{T}^{-1} (\mat{W} \odot \mat{M}_i^R) \vecbold{q}_i 
\end{align*}
and equivalently, in matrix form:
\[
\boxed{
\widetilde{\mathbf{A}}
= \operatorname{lower}(\mat{Q}\mat{K}^\top)
- \operatorname{lower}(\mat{Q}\mat{W}^\top)\,
  \mat{T}^{-1}\,
  \operatorname{strictLower}(\mat{W}\mat{K}^\top)
}
\]
This decomposition enables efficient pairwise attention computation using shared UT structure and interval-specific masking. However, computing the global inverse \(\mathbf{T}^{-1}\) incurs a prohibitive \(\mathcal{O}(L^3)\) time complexity with respect to sequence length \(L\). In the following section, we introduce a blockwise algorithm that obtain the same result using only \emph{local} inversions, thereby reducing the overall complexity to match that of standard attention mechanisms.
\vspace{-3mm}
\subsection{Efficient Training}
To enable hardware-efficient (blockwise) training, cumulative Householder transformations must be pre-applied to the left and right boundaries of each block; otherwise, the token-specific nature of these transformations would render blockwise computation infeasible. To this end, we define boundary-adjusted query and key matrices as follows:
{\footnotesize
\begin{align*}
    (\overleftarrow{\mathbf{Q}}_{[i]})_t &= \left(\prod_{m=iB+1}^{iB+t} \mathbf{H}_m \right) \mathbf{q}_{iB+t}=  \mathbf{q}_{iB+t} - \mathbf{W}_{[i]}^\top\mathbf{T}_{[i]}^{-1}(\mathbf{W}_{[i]}\odot \mathbf{M}_t^R)\mathbf{q}_{iB+t} &&\in \mathbb{R}^d, 
    \\
    (\overrightarrow{\mathbf{K}}_{[i]})_s &= \left(\prod_{m=iB+s+1}^{(i+1)B} \mathbf{H}_m\right)^\top \mathbf{k}_{iB+s} = \mathbf{k}_{iB+s} - (\mathbf{T}_{[i]}^{-1} \mathbf{W}_{[i]})^\top (\mathbf{W}_{[i]} \odot \mathbf{M}_s^L)  \mathbf{k}_{iB+s}
     &&\in \mathbb{R}^{d},
\end{align*}
}a
following the derivation in \S\ref{sec:matrix_form_path_attention}. In matrix form, these can be expressed as:
{\footnotesize
\begin{align*}
    \overleftarrow{\mathbf{Q}}_{[i]} &= \mathbf{Q}_{[i]} - \colorbox{yellow!20}{$\tril(\mathbf{Q}_{[i]} \mathbf{W}_{[i]}^\top)$}  \colorbox{green!20}{ $\mathbf{T}^{-1}_{[i]} \mathbf{W}_{[i]}$} && \in \mathbb{R}^{B \times d}, \\
    \overrightarrow{\mathbf{K}}_{[i]} &= \mathbf{K}_{[i]} - \left(\colorbox{orange!20}{$\mathbf{T}^{-1}_{[i]} \strictLower(\mathbf{W}_{[i]} \mathbf{K}_{[i]}^\top)$}\right)^\top \mathbf{W}_{[i]} && \in \mathbb{R}^{B \times d}.
\end{align*}
}
With these quantities, we express the  attention block computation as:
{\footnotesize
\begin{align*}
    \mathbf{A}_{[i],[j]} \propto 
    \begin{cases}
        \exp\left( \overleftarrow{\mathbf{Q}}_{[i]} \left(\prod_{m=j+1}^{i-1} \mathbf{P}_{[m]}\right)^\top \overrightarrow{\mathbf{K}}_{[j]}^\top \right), & \text{if } i > j, \\
        \exp\left( \mathbf{Q}_{[i]} \mathbf{K}_{[i]}^\top 
        - \colorbox{yellow!20}{$\tril(\mathbf{Q}_{[i]} \mathbf{W}^\top_{[i]})$} \colorbox{orange!20}{$\mathbf{T}^{-1}_{[i]} \strictLower(\mathbf{W}_{[i]} \mathbf{K}_{[i]}^\top)$} \right), & \text{if } i = j,
    \end{cases}
     \in \mathbb{R}^{B \times B},
\end{align*}
}
where \(\mathbf{P}_{[i]} := \prod_{j=1}^B \mathbf{H}_{iB+j} = \mathbf{W}_{[i]}^\top \colorbox{green!20}{$\mathbf{T}_{[i]}^{-1} \mathbf{W}_{[i]}$} \in \mathbb{R}^{d\times d}\). Due to associativity, the cross-block term can be computed \emph{incrementally}: $\overleftarrow{\mathbf{Q}}_{[i]} \left(\prod_{m=j+1}^{i-1} \mathbf{P}_{[m]}\right)^\top \overrightarrow{\mathbf{K}}_{[j]} = 
(((\overleftarrow{\mathbf{Q}}_{[i]} \mathbf{P}_{[i-1]}^\top) \cdots ) \mathbf{P}_{[j+1]}^\top) \overrightarrow{\mathbf{K}}_{[j]} $.

We adapt the FlashAttention-style block processing framework to perform a right-to-left scan over key/value blocks, enabling this product accumulation in a streaming manner. Concretely the modified blockwise workflow for processing query block \(i\) is as follows:\footnote{Different query blocks can be executed in parallel, following a context-parallel strategy similar to that of FlashAttention-2 \cite{flashattention2}.}

\scalebox{1}{
\footnotesize 
\begin{tcolorbox}
\begin{itemize}
    \item Load \(\overleftarrow{\mathbf{Q}}_{[i]}\) into SRAM.
    \item For key/value blocks \colorbox{red!10}{\(j = i-1, \ldots, 0\)} (right-to-left scan):
    \begin{itemize}
        \item Load \(\overrightarrow{\mathbf{K}}_{[j]}\), \(\mathbf{V}_{[j]}\), and \colorbox{red!10}{\(\mathbf{P}_{[j]}\)} from HBM into SRAM.
        \item Compute logits: \(\widetilde{\mathbf{A}}_{[i],[j]} = \overleftarrow{\mathbf{Q}}_{[i]} \overrightarrow{\mathbf{K}}_{[j]}^\top\).
        \item Update online softmax statistics and accumulate output as in FlashAttention.
        \item \colorbox{red!10}{Update query: \(\overleftarrow{\mathbf{Q}}_{[i]} \leftarrow \overleftarrow{\mathbf{Q}}_{[i]} \mathbf{P}_{[j]}^\top\)}.
    \end{itemize}
    \item Normalize and store the output to HBM as in FlashAttention.
\end{itemize}
\end{tcolorbox}
}
This design preserves the I/O efficiency of FlashAttention while incorporating PaTH’s dynamic positional encoding via streaming cumulative products.
\paragraph{Complexity analyses.} 
For each head, the attention computation between a pair of query and key blocks takes $\mathcal{O}(B^2d + Bd^2)$ time-$\mathcal{O}(B^2d)$ for computing attention scores and $\mathcal{O}(Bd^2)$ for applying the transition on queries. Since there are $(L/B)^2$ such block pairs, the total attention cost is $\mathcal{O}(L^2d + Ld^2/B)$. For preprocessing, computing the local Householder-based transformation for each query/key block involves an inversion step with cost $\mathcal{O}(B^3 + B^2d)$. With $L/B$ such blocks, the total preprocessing cost is $\mathcal{O}(LB^2 + LBd)$. When $B \approx d$ (which is often the case), the overall complexity is comparable to standard attention, with quadratic scaling in sequence length.
\paragraph{Speed Comparison.}
\vspace{-4mm}
\begin{wrapfigure}{r}{0.35\linewidth}
    \centering
    \vspace{-4mm}
    \includegraphics[width=1\linewidth]{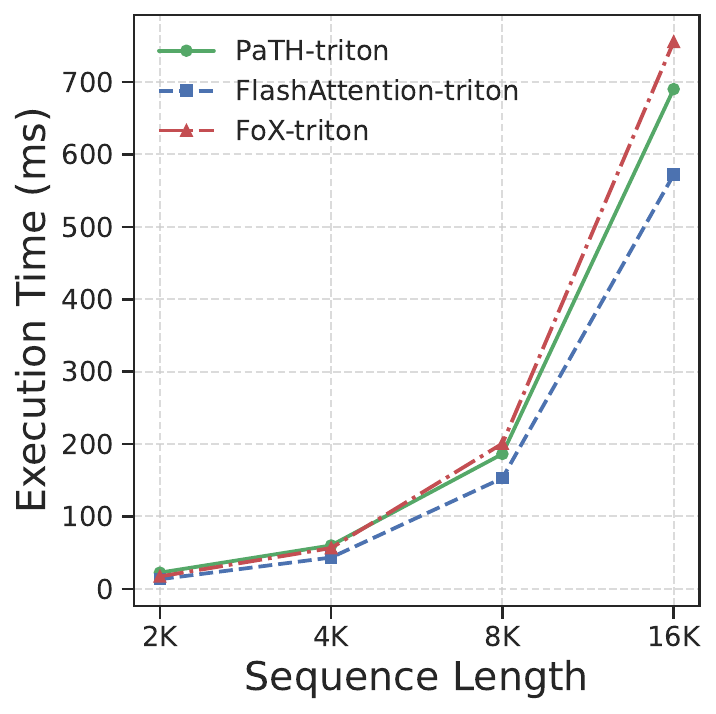}
    \vspace{-4mm}
    \caption{Speed comparison between attention variants.}
    \label{fig:kernel-speed}
    \vspace{-8mm}
\end{wrapfigure}

We implement the PaTH attention kernel\footnote{\vspace{3mm}\url{https://github.com/fla-org/flash-linear-attention/tree/main/fla/ops/path_attn}} in Triton~\cite{tillet_triton_2019} and benchmark its runtime on a single H100 GPU against FoX and standard RoPE attention under identical settings: batch size 32, 32 heads, head dimension 64, and varying sequence lengths. Results are shown in Figure~\ref{fig:kernel-speed}. PaTH incurs a modest slowdown compared to RoPE, but outperforms FoX. Further speedups are expected from future kernel-level optimizations (e.g., via ThunderKittens~\cite{spector2025thunderkittens}).
\vspace{-2mm}
\subsection{Efficient Inference}
\vspace{-2mm}
We can efficiently update historical keys \emph{in-place} using the current timestep's transition matrix:
\begin{equation}
\mathbf{k}_i^{(t)} \leftarrow  (\mathbf{I} - \beta_t \mathbf{w}_t \mathbf{w}_t^{\top})\mathbf{k}_i^{(t-1)} \quad \text{for all } i < t,
\label{eq:iterative_refine_k_cache}
\end{equation}
where $\mathbf{k}_i^{(i)}=\mathbf{k}_i$.
This in-place update strategy {eliminates} the need to store a separate cache for $\{\mathbf{w}_i\}_{i\le t}$ or recompute the somewhat expensive cumulative Householder transformations.
Then, the decoding stage becomes {equivalent} to standard softmax attention decoding, enabling compatibility with existing inference kernels such as FlashDecoding \citep{dao2023flashdecoding} and PagedAttention \citep{Kwon2023EfficientMM}. This approach maintains inference efficiency while preserving PaTH's dynamic positional encoding capabilities.
Similarly, PaTH-FoX can be reduced to FoX decoding and thus compatible with the acceleration techniques of FoX (e.g., adaptive pruning \citep{lin2025adaptivecomputationpruningforgetting}).

Before decoding, the initial key representations $\mathbf{k}_i^{(i)}$ must be transformed to $\mathbf{k}_i^{(l)}$ to account for subsequent Householder transformations.
This transformation could be computed blockwise as:
\begin{align*}
\mathbf{K}_{[t]}^{(l)} = \overrightarrow{\mathbf{K}_{[t]}} \mathbf{P}_{[t+1]} \cdots \mathbf{P}_{[\lceil l/B\rceil]}
\label{eq:key_prefill}.
\end{align*}
It is also possible to reuse the suffix cumulative product $\mathbf{P}_{[t+1]} \cdots \mathbf{P}_{[\lceil l/B\rceil]}$  across blocks to reduce the overall complexity to linear.

\vspace{-2mm}
\subsection{Discussion}
\vspace{-1mm}
\paragraph{Compatibility with context-parallelism (CP) techniques.}
To extend our FlashAttention2-style context-parallel strategy to distributed settings such as Ring Attention \cite{liu_ring_2023,li_sequence_2023}, PaTH's cumulative Householder transformations must be aligned with the ring-based key/value (KV) passing mechanism. Each device first precomputes its locally transformed queries ($\overleftarrow{\mathbf{Q}}$) and keys ($\overrightarrow{\mathbf{K}}$) by applying its resident Householder transformations. This also yields the local Householder product matrix $\mathbf{P}^{(d)}$ and softmax statistics for its sequence chunk. During inter-device communication, each device transmits its transformed $\overrightarrow{\mathbf{K}}$ vectors (with $\mathbf{V}$) and the associated $\mathbf{P}^{(d)}$ to the next device in the ring.

Upon receiving a $(\overrightarrow{\mathbf{K}}, \mathbf{V}, \mathbf{P}^{(d)})$ tuple from an earlier segment, the query-holding device first computes attention outputs using its current $\overleftarrow{\mathbf{Q}}$ and the incoming (transformed) keys, accumulating both the output and the corresponding online softmax statistics like standard attention. It then updates its $\overleftarrow{\mathbf{Q}}$ \emph{in-place} via $\overleftarrow{\mathbf{Q}} \leftarrow \overleftarrow{\mathbf{Q}} (\mathbf{P}^{(d)})^\top$, propagating the cumulative path transformation forward along the ring. This sequence—compute output with current state, then update query state via incoming $\mathbf{P}^{(d)}$—faithfully emulates PaTH's logical right-to-left scan, enabling correct path reconstruction across distributed segments.

\vspace{-2mm}
\paragraph{Iterative refinement of KV cache.}
From ~\eqref{eq:iterative_refine_k_cache}, PaTH iteratively applies low-rank updates to the historical key cache, forming a cumulative product of identity-plus-low-rank terms in the attention logit computation. This dynamic modification of the key cache is conceptually intriguing; see \citet{song2025causalattentionlookaheadkeys,ewer2025entpencoderonlytokenprediction,selective_attention} for related ideas. Future directions include (i) extending this update mechanism to refine value vectors and (ii) developing more expressive yet hardware-efficient KV cache refinement schemes beyond the low-rank formulation used in PaTH.

\vspace{-4mm}
\section{Experiments}
\vspace{-2mm}
\begin{wrapfigure}{r}{0.35\linewidth}
\vspace{-10mm}
\centering
\begin{minipage}[t]{1\linewidth}
    \footnotesize
    \centering
    \setlength{\tabcolsep}{4pt}
    \renewcommand{\arraystretch}{1.1}
    \begin{tabular}{llll}
        \toprule
        \textbf{Method} & \textbf{ID} & \multicolumn{2}{c}{\textbf{OOD}} \\
        \cmidrule(lr){3-4}
        & & \textbf{Sparse} & \textbf{Dense} \\
        \midrule
        {RoPE} & 6.9\% & 40.3\% & 0.01\% \\
        {SBA}~\cite{stick-breaking} & 9.6\% & 38.9\% & 0\% \\
        {FoX}~\cite{fox} & 8.3\% & 36.3\% & 0\% \\
        {PaTH} & 0\% & 0.0001\% & 0\% \\
        \bottomrule
    \end{tabular}
    \captionof{table}{FFLM error rate (\%) on ID/OOD test sets. All models are 1-layer, 2-head, 64-dim.}
    \label{tab:fflm}
\end{minipage}
\hfill
\begin{minipage}[t]{1\linewidth}
    \centering
    \includegraphics[width=\linewidth]{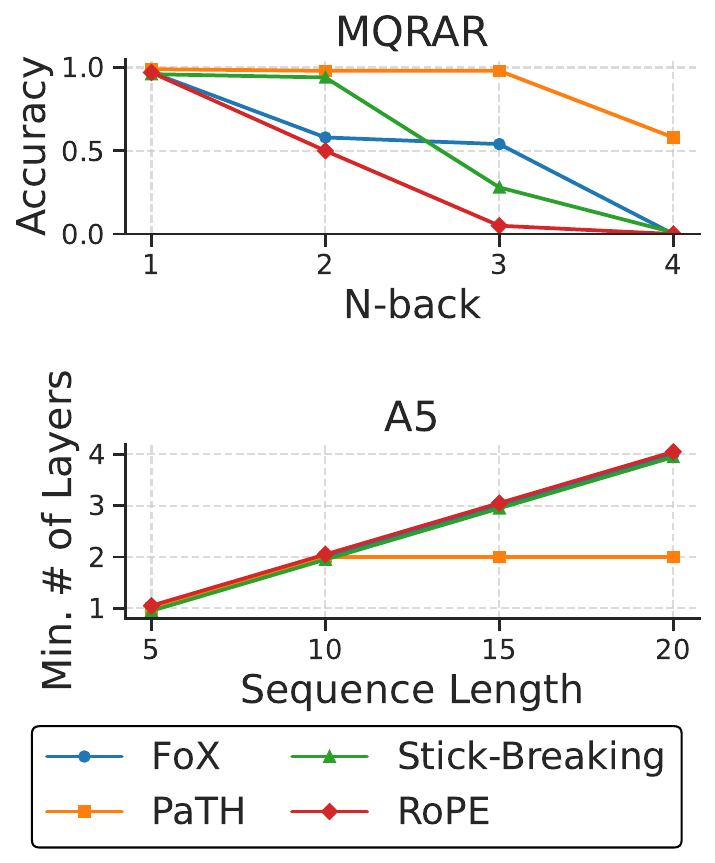}
    \captionof{figure}{Results on MQRAR-N (top) and $A_5$ word problem (bottom).}
    \label{fig:mqrar-a5}
\end{minipage}
\vspace{-8mm}
\end{wrapfigure}

We experiment with PaTH attention and compare it against various baselines: ordinary RoPE attention, Stick-Breaking Attention (SBA) \citep{stick-breaking}, and Forgetting Transformer (FoX) \citep{fox}.
\vspace{-1mm}
\subsection{Synthetic Tasks}
\vspace{-2mm}
\paragraph{Flip-flop language modeling.}

We first experiment with \emph{flip-flop language modeling} (FFLM)  \citep{liu2023exposing}, a diagnostic synthetic task which has been found to be challenging for existing architectures. In this  task,  the vocabulary consists of $\Sigma = \{\texttt{w},\texttt{r}, \texttt{i},\texttt{0},\texttt{1}\}$. Given a sequence of \texttt{\underline{w}rite-bit}, \texttt{\underline{r}ead-bit}, \texttt{\underline{i}gnore-bit} actions, the model must produce the bit (\texttt{0} or \texttt{1}) after the most recent \texttt{{w}rite-bit} action.
For example given the sequence ``\texttt{w 1 r 1 w 0 i 1 i 0 i 1 r}'', the model is expected to recall the most recently written bit, i.e., \texttt{0}.
Despite its simplicity, flip-flop language modeling is  diagnostic of many real-world capabilities, such as modeling long-range dependencies, the ability to ignore distractors, and sequential reasoning.
\citet{liu2023exposing} find that RoPE-based transformers struggle on this task and provide theoretical insights into why RoPE-based attention mechanisms find it inherently difficult. In Theorem~\ref{thm:fflm} of the appendix we show that there exists a 2-layer PaTH-based transformer that can solve this task. Empirically, our experiments in Table~\ref{tab:fflm}  show that PaTH-based transformers can practically learn to almost perfectly solve this task with {only} a single layer and two attention heads, including out-of-distribution settings whose frequency of operations are different from than in training (sparse means 98\% of the operations are \texttt{ignore}, while dense means only 10\% are \texttt{ignore}).

\vspace{-2mm}
\paragraph{Word problems.}

We showed in \S\ref{sec:path-theory} that PaTH can theoretically extend transformers beyond $\mathsf{TC}^0$. However, it is a different question  as to whether PaTH transformers can empirically \emph{learn} to solve $\mathsf{NC}^1$-complete problems based on actual data. To test this, we follow \citet{merrill_illusion_2024} and use a word problem task based on the alternating group $A_5$, a subgroup of $S_5$ (on which the word problem is also $\mathsf{NC}^1$-complete).  This task requires determining if a ``word''---a sequence of group operations using fixed generators and their inverses---evaluates to the identity element. Successfully performing this symbolic  task means the model must implicitly learn algebraic rules like permutation composition and cancellation. As a concrete example, consider generators $g_1 = (1\ 2\ 3)$, $g_2 = (1\ 2\ 4)$, and $g_3 = (1\ 2\ 5)$, with their respective inverses $g_1^{-1}, g_2^{-1}, g_3^{-1}$. Given the word $w = g_1 \cdot g_2 \cdot g_1^{-1} \cdot g_2^{-1}$, the model must determine if $w$ equals the identity permutation. In this instance, $w$ is not the identity, and the model needs to correctly track the sequence of permutations to arrive at this conclusion.
Figure~\ref{fig:mqrar-a5} (bottom) shows that PaTH can solve this task defined as achieving above 90\% acciracy following \citet{merrill_illusion_2024}) with fewer layers than baselines.

\vspace{-2mm}
\paragraph{Multi-query Repeated Associative Recall with \textit{N}-back (MQRAR-$N$).}
We adapt the Multi-query Repeated Associative Recall (MQRAR) task from \citet{stick-breaking} (itself an enhancement of MQAR~\citep{arora_zoology_2023}) to MQRAR-$N$-back. This task tests a model's associative recall ability by requiring it to find the $N$-th last assignment for a given variable, drawing an analogy to the \emph{$N$-back} task in experimental psychology~\citep{kirchner1958age}. Recalling the most recent assignment ($N=1$) can often be accomplished by simpler, recency-focused mechanisms. However, retrieving the $N$-th last assignment ($N>1$) more rigorously probes a model's capacity to track an ordered history of states for specific variables, especially when recent information must be ignored. An example sequence for $N=2$ is:
\begin{center}
\scalebox{0.95}{
\begin{tabular}{rcccccccccccccccccccc}
\textbf{Input}  &  
\textcolor{blue}{A}      & \textcolor{blue}{1}      &  
\textcolor{red}{B}      & \textcolor{red}{2}      &
C      & 3      & D      & 4      &
\textcolor{olive}{A}      & \textcolor{olive}{5}      &
B      & 6      &  
A & 7      & C      & 8      & A & 9      & B & 0      \\
\textbf{Output} & $\phi$ & $\phi$ & $\phi$ & $\phi$ & $\phi$ & $\phi$ & $\phi$ & $\phi$ & $\phi$ & $\phi$ & $\phi$ & $\phi$ &  
\textcolor{blue}{1} & $\phi$ & $\phi$ & $\phi$ &
\textcolor{olive}{5} & $\phi$ &
\textcolor{red}{2} & $\phi$
\end{tabular}
}
\end{center}
We compare Transformer models using RoPE, SBA, FoX, and PaTH on their ability to handle MQRAR-$N$-back with $N \in \{1,2,3,4\}$. All models are 2-layer Transformers with a 256-dimensional hidden state, 2 attention heads. For the task we use 32 key-value pairs a  sequence length of 768.
Figure~\ref{fig:mqrar-a5} shows the results, where we find that  PaTH attention can successfully track variable values with $N$-back recall for $N < 4$, whereas recent baselines (SBA and FoX) still struggle.

\vspace{-1mm}
\subsection{Language Modeling}
\vspace{-1mm}

We pretrain language models with $\sim$760M parameters on the Fineweb-Edu corpus~\citep{penedo_fineweb_2024} for 50B tokens using the Mistral tokenizer and a sequence length of 4096. We then evaluate the pretrained models on the following benchmarks. See appendix~\ref{sec:appendix_experiment} for full  details and additional experiments.
\vspace{-1mm}
 \begin{table*}[h!]
\vspace{0mm}
\centering
\small
\addtolength{\tabcolsep}{-2.5pt}    
\begin{tabular}{l|cc|ccccccccc}
\toprule
\textbf{Model}  & \textbf{Wiki.}  &  \textbf{LMB.} &  \textbf{LMB.} & \textbf{PIQA} &    \textbf{Hella.} & \textbf{Wino.} & \textbf{ARC-e} &  \textbf{ARC-c}  &  \textbf{Avg.} \\
 & ppl $\downarrow$  &  ppl $\downarrow$  &  acc $\uparrow$  & acc $\uparrow$ &   acc\_n $\uparrow$  & acc $\uparrow$  & acc $\uparrow$    & acc\_n $\uparrow$ & $\uparrow$   \\
\midrule
\hspace{2mm} RoPE & 19.01  &  19.77& 40.4 & 70.2  & 50.3 & 54.9 & 67.2 & 33.3 & 52.7\\
\hspace{2mm} FoX & 18.33 & 18.28 & 41.7 & \textbf{70.8} & 50.9 & \textbf{57.1} & 65.7 & 32.6 & 53.1\\
\hspace{2mm} PaTH & \underline{18.03} & \underline{16.79} & \underline{44.0} & 70.5 & \underline{51.5} & 56.0 & \textbf{68.9} & \textbf{34.4} & \textbf{54.2}\\
\hspace{2mm} PaTH-FoX & \textbf{17.35} & \textbf{16.23} & \textbf{44.1} & \textbf{70.8} & \textbf{52.2} & \textbf{57.1} & \underline{67.3} & \underline{33.9} & \textbf{54.2}\\
\bottomrule
\end{tabular}
\addtolength{\tabcolsep}{2.5pt}    
\centering
\caption{
Results on perplexity and zero-shot commonsense reasoning tasks for 760M models trained on 50B tokens. Best results are highlighted in bold, while the second best results underlined.
}
\vspace{-4mm}
\label{tab:commonsense_results}
\end{table*}
 
\vspace{-2mm}
\paragraph{Standard LM benchmarks.}
We evaluate on Wikitext perplexity and selected 
zero-shot common sense reasoning tasks, including of LAMBADA~\citep[LMB.;][]{paperno_lambada_2016} (OpenAI version), PiQA~\citep{bisk2020piqa}, HellaSwag~\citep[Hella.; ][]{zellers2019hellaswag}, WinoGrande~\citep[Wino.;][]{sakaguchi2021winogrande}, ARC-easy (ARC-e) and ARC-challenge (Arc-c) \citep{arc-ce}. Table \ref{tab:commonsense_results} shows the results. PaTH consistently outperforms RoPE across all tasks, and surpasses FoX on most. PaTH-FoX performs comparably with PaTH while achieving the lower perplexity.

\vspace{-2mm}
\paragraph{Length extrapolation.}
Figure~\ref{fig:length_extrapolation} presents results on three long-context corpora from different domains: PG-19 \cite{Rae2020Compressive} (books), CodeParrot (code), and NarrativeQA \cite{kocisky-etal-2018-narrativeqa}(conversational English). Both PaTH-FoX and FoX generalize  up to 64K tokens,
 with PaTH-FoX consistently achieving lower perplexity. The improvement is especially pronounced in the code domain, where state tracking---e.g., tracking variable values---is crucial. PaTH alone generalizes reasonably well, maintaining stable performance up to 32K tokens, after which perplexity gradually increases (in contrast to RoPE, which fails abruptly beyond 4K). These results underscore the benefit of data-dependent position encoding and the critical role of the forgetting mechanism in enabling robust generalization to longer contexts.
\begin{figure}[h!]
    \centering
    \includegraphics[width=1\linewidth]{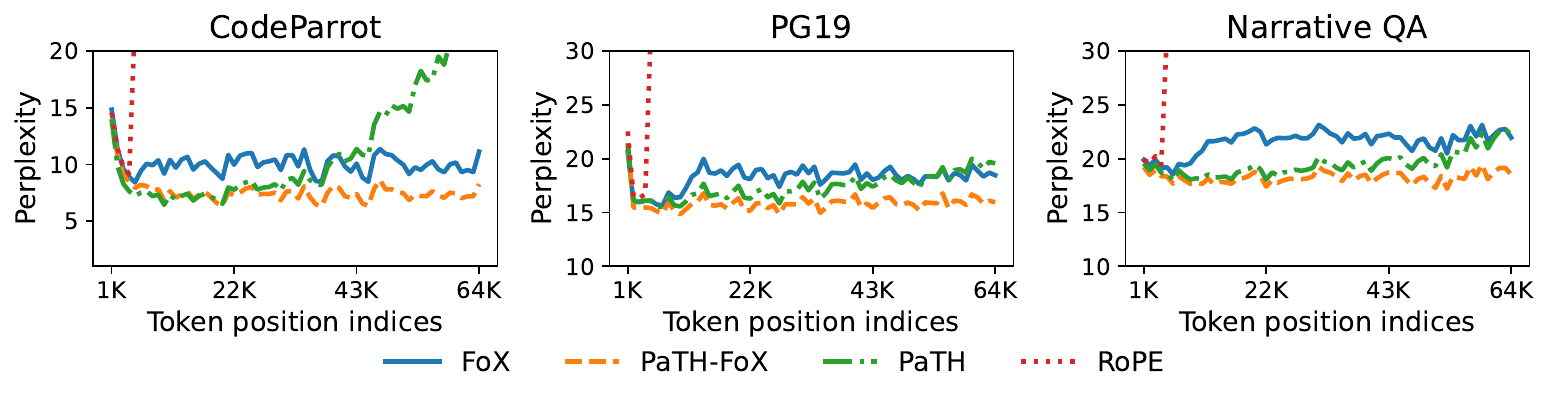}
    \vspace{-6mm}
    \caption{Length extrapolation results for 760M models trained on 50B tokens with 4096 context length.}
    \vspace{-4mm}
\label{fig:length_extrapolation}
\end{figure}
\vspace{-3mm}
\paragraph{Long-context benchmarks.}  
Table~\ref{tab:long_summary} summarizes results on four challenging long-context benchmarks: RULER~\cite{hsieh2024ruler}, BABILONG~\citep{kuratov2024babilong}, PhoneBook~\cite{jelassi_repeat_2024}, and LongBench-E~\cite{bai-etal-2024-longbench}.  For RULER, we report the zero-shot average accuracy across all 13 subtasks and also breakdowns by task categories and context length in Figure~\ref{fig:ruler_result}; for BABILONG, we follow standard practice and report the average few-shot accuracy over subproblems QA0–QA5 (see Figure~\ref{fig:babilong-by-task-and-length} for breakdowns by task and context length); for LongBench-E, we report average scores across three length intervals—0–4K, 4–8K, and 8–16K—and provide detailed results in Table~\ref{table:longbench-e}.
\begin{figure}[h!]
    \centering
        \vspace{-2mm}
\includegraphics[width=1\linewidth]{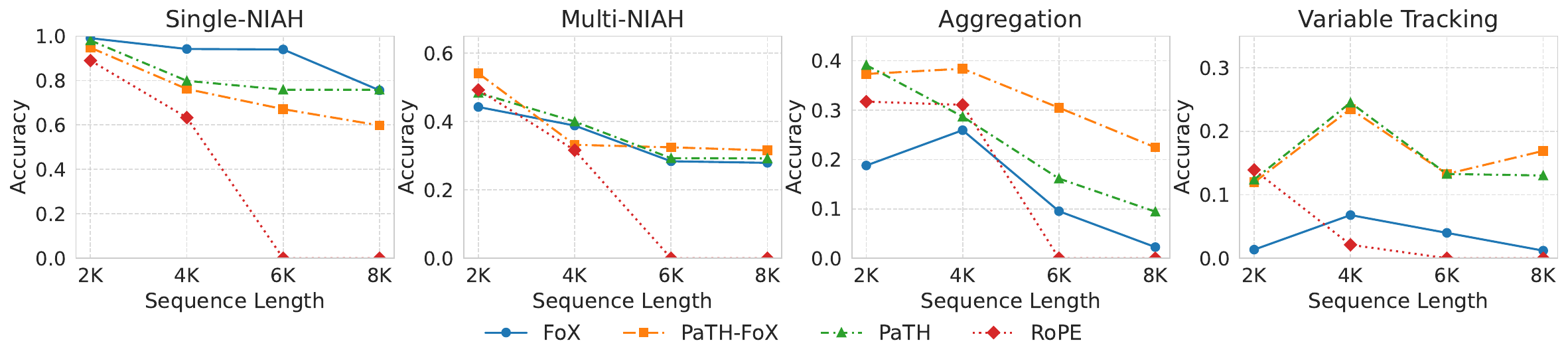}
    \vspace{-6mm}
    \caption{RULER results grouped by different task categories. 
    }
    \label{fig:ruler_result}
    \vspace{-6mm}
\end{figure}
\begin{table*}[h!]
\vspace{2mm}
\centering
\small 
\setlength{\tabcolsep}{5pt}
\renewcommand{\arraystretch}{0.9}
\begin{tabular}{l|ccc|cccc|ccc|ccc}
\toprule
\textbf{Model} & \multicolumn{3}{c|}{RULER} & \multicolumn{4}{c|}{BABILONG}  & \multicolumn{3}{c}{PhoneBook} & \multicolumn{3}{c}{LongBench-E} \\
 &  4K & 8K & 16K & 0K & 4K & 8K & 16K  & 2K & 4K & 8K & 4K & 8K & 16K\\
\midrule
RoPE        & 35.7 &  1.3 &  0.0 & 33.0 & 13.8 &  0.0 &  0.0         & 32.3 &      15.6     &  0.0  & 18.7 & 3.7& 2.0 \\
FoX            & 41.6 & 29.5 &   4.9   & 23.8 &   20.2 &  8.2 &  4.4  & 62.5 & 38.5 & 17.7 & 23.4 & 16.9 & 11.7  \\
PaTH            &\textbf{44.6}&\textbf{34.8}&18.7&\textbf{33.8}&24.6&16.8&\textbf{11.6}& 55.2 &    20.8  &     0.0 & \textbf{27.2} & \textbf{22.5} & 14.4 \\
PaTH-FoX   &       42.3 & 34.0 &   \textbf{22.6}  & 28.6  &\textbf{25.6}&\textbf{19.2}&10.0 &      \textbf{89.6} & \textbf{93.8} &   \textbf{66.6} & 23.4 & 21.8 & \textbf{16.1} \\
\bottomrule
\end{tabular}
\vspace{-1mm}
\caption{Summary of average scores on long-context tasks for 760M models with training length 4096.}
    \vspace{-4mm}
\label{tab:long_summary}
\end{table*}

These benchmarks assess different aspects of long-context understanding. Accurate retrieval is critical and is tested by RULER’s Single- and Multi- Needle-In-A-Haystack (NIAH) tasks, as well as by PhoneBook Lookup, an extreme case where every token in the context is a `needle''. 
PaTH-FoX achieves the highest overall retrieval performance,  excelling in the more difficult Multi-NIAH and PhoneBook settings.
\begin{wraptable}{r}{0.45\linewidth}
\footnotesize
\vspace{-0mm}
\centering
\setlength{\tabcolsep}{5pt}
\renewcommand{\arraystretch}{1.05}
\begin{tabular}{lccc}
\toprule
\textbf{Model} & \textbf{GSM8K} & \textbf{HumanEval} & \textbf{MBPP+} \\
\midrule
RoPE & 19.9 & 23.1 & 47.1 \\
FoX  & 15.5 & 21.3 & 48.2 \\
PaTH & \textbf{20.1} & \textbf{25.6} & \textbf{51.3} \\
\midrule
Base & 8.6 & 16.4 & 38.6 \\
\bottomrule
\end{tabular}
\caption{Results on math and coding benchmarks after conversion. 
\textit{Base} denotes the teacher model performance before continued pretraining.}
\label{tab:reasoning}
\vspace{-5mm}
\end{wraptable}
Beyond retrieval, RULER also probes state tracking through its Variable Tracking (VT) task.\footnote{E.g., given ``\texttt{VAR X1 = 12345, VAR X2 = 3212, ..., VAR X10 = X1, ...}'' the query might ask ``\texttt{Find all variables assigned the value 12345}'', with the correct answer being ``\texttt{X1, X10}''.} PaTH and PaTH-FoX achieve substantial gains here, consistent with their advantages on synthetic state-tracking tasks.  
BABILONG further tests such capabilities in a narrative setting, embedding bAbI-style logic queries within long PG-19 passages---thus requiring both entity tracking and multi-hop reasoning over extended text. On these tasks as well, PaTH and PaTH-FoX clearly outperform FoX and RoPE.

\vspace{-2mm}
\subsection{Converting RoPE into PaTH}
\vspace{-2mm}
\begin{wraptable}{r}{0.53\textwidth}
  \centering
\vspace{-5mm}
  \footnotesize 
  \begin{tabular}{lrr}
    \toprule
    \textbf{Task} &  \makecell{\textbf{Teacher} (RoPE)} & \makecell{\textbf{Student} (PaTH)} \\
    \midrule
    MMLU & \textbf{74.21} & 73.28 \\
    HellaSwag & \textbf{85.20} & 84.83 \\
    Winogrande & \textbf{71.51} & 68.90 \\
    GPQA Diamond & 33.33 & \textbf{34.34} \\
    TheoremQA & 18.12 & \textbf{21.88} \\
    GSM-8K & 80.29 & \textbf{80.67} \\
    MATH & \textbf{69.10} & 65.38 \\
    HumanEval & \textbf{82.32} & 77.44 \\
    MBPP & 74.71 & \textbf{75.10} \\
    RULER (4K) & \textbf{94.37} & 93.24 \\
    \bottomrule
  \end{tabular}
  \vspace{-2mm}
  \caption{\texttt{Qwen2.5-7B-Instruct} distillation results (without continued pretraining on  math/code data).}
  \label{tab:distill}
    \vspace{-5mm}
\end{wraptable}
Training LLMs from scratch is highly resource-intensive.
We hence explore \emph{converting} pretrained RoPE-based LLMs into PaTH-based LLMs, in particular targeting improvements in math/coding domains.

Following \citet{goldstein2025radlads}, we use a two-stage distillation process first minimizes the Mean Squared Error (MSE) between the attention-layer outputs of the RoPE teacher and the PaTH student, followed by fine-tuning using KL divergence on the outputs. The first and second stages use 100M and 3B tokens, respectively, from the DCLM corpus \cite{li2025datacomplmsearchgenerationtraining}.
After distillation, we perform continued pretraining  using a balanced mixture (1:1:1) of DCLM (text), Python-Edu (code), and MegaMathWeb (math) corpora~\cite{zhou2025megamathpushinglimitsopen} of 21B tokens. Since it may be difficult to observe sizeable improvements over existing (often overtrained) state-of-the-art models that have already been exposed to extensive 
math/coding data, we work with the \texttt{SmolLM2-1.7B} checkpoint\footnote{\url{https://huggingface.co/HuggingFaceTB/SmolLM2-nanotron-ckpt/tree/main/1700M/pre-decay}}
 taken immediately before the WSD decay stage~\cite{hu2024minicpmunveilingpotentialsmall}, i.e., prior to exposure to high-quality math and code data. As shown in Table~\ref{tab:reasoning}, PaTH consistently outperforms both RoPE and FoX.
We speculate that PaTH’s expressivity and state-tracking capabilities contribute to its advantages in handling math and coding tasks.

While the above results are promising, we find mixed results when distilling from models that have already been extensively (over)trained. Table~\ref{tab:distill} shows the performance when distilling \texttt{Qwen2.5-7B-Instruct} \cite{qwen2025qwen25technicalreport} without the continued pretraining stage: PaTH student can improve the teacher’s performance across some benchmarks,  but there is degradation across others.  These distillation experiments suggest that it may be important to start the conversion process before the original model (potentially) ossifies and  becomes difficult to convert; better conversion recipes remain an avenue for future work.

\vspace{-2mm}
\section{Related Work}
\paragraph{Data-dependent positional encoding.}

RoPE \cite{rope} has been the \emph{de facto}  position encoding scheme in large language models. However, RoPE's static nature makes it unsuitable for dynamically adapting to long sequences, motivating works on RoPE length extension \cite[\textit{inter alia}]{peng2023yarnefficientcontextwindow,chen2023extendingcontextwindowlarge,liu2024scalinglawsropebasedextrapolation}. Yet, these methods remain within the RoPE framework and can only mitigate rather solve its limitations. 
An alternative line of work focuses on \emph{data-dependent} position encoding.
While promising, these approaches operate solely at the attention logit level, modifying the $\mathbf{Q}\mathbf{K}^\top$ scores through post hoc transformations \cite{zheng2024dape,fox,zheng2024dapev2processattention,cope,selective_attention,stick-breaking,csords2022the}. However, the dot-product structure is fundamentally limited in its ability to represent more intricate dependencies~\cite{fagnou_chain_2024,ape}, motivating work on \emph{algebraic position encodings} \cite{ape}, where relative positions are encoded via cumulative matrix products.  While conceptually similar to our approach, APE focuses exclusively on data-\emph{independent} orthogonal (and thus invertible) matrices that are simultaneously diagonalizable~\cite{qin2023linearized}, and thus inherently limited in expressivity~\cite{cirone2024theoretical,merrill_illusion_2024,terzic2025sdssm}. In contrast, our proposed PaTH method addresses this limitation by using \emph{data-dependent} cumulative Householder-like products, which are non-invertible, non-commutative, and not simultaneously diagonalizable, leading to more expressive transformations of the unnormalized attention logits. Moreover, PaTH is compatible with other attention variants, such as FoX, providing a principled and extensible framework for positional encoding.

\paragraph{Improving state tracking in language models.}
Transformer-based language models often struggle with state and entity tracking \citep{kim_entity_2023, prakash_fine-tuning_2023,merrill_illusion_2024}. This is potentially due to the standard transformer architecture's finding it difficult to reliably emulate finite-state automata \citep{liu2023exposing, liu2023transformerslearnshortcutsautomata, zhou2024transformersachievelengthgeneralization, bhattamishra2024separations}. To shed light on the theoretical reasons transformers struggle with word problems (tasks requiring careful state tracking), recent studies have analyzed their learning dynamics \citep{li2025howlanguagemodelstrack} and conducted mechanistic investigations \citep{zhang2025finitestateautomatainside}. Researchers have also proposed alternative attention mechanisms to enhance self-attention's expressivity. These aim to capture richer pairwise dependencies than standard dot-product attention, often by incorporating lightweight recurrence—such as right-to-left cumulative sums—into the attention logits \citep{cope, selective_attention, stick-breaking}.  \citet{fagnou_chain_2024} propose a matrix-inversion-based attention mechanism for capturing path-level dependencies, which is conceptually similar to our approach. While these methods show empirical improvements in state or entity tracking tasks, they are largely heuristic. In this work, we draw inspiration from theoretical studies on parallelizing RNNs while preserving their state tracking capabilities \citep{merrill_illusion_2024, grazzi2025unlocking, deltaproduct, peng2025rwkv7gooseexpressivedynamic}. From these, we design a new softmax-based attention mechanism that is performant and efficient.

\section{Limitation}
\label{sec:limitation}
 While PaTH improves expressivity, it has several practical caveats. Training stability can be sensitive to numerical precision. In particular, the cumulative product of Householder transformations may become unstable under BF16, requiring clipping of the scaling factor $\beta$ to prevent it from reaching 2, as BF16 rounding can otherwise produce eigenvalues larger than 1 and cause divergence. In addition, the speed comparisons in this work are restricted to head dimension 64. Larger head dimensions increase the computational and memory overhead of PaTH. Finally, PaTH does not directly model rotations, as a single reflection matrix does not subsume rotational transformations. This may limit certain geometric inductive biases present in RoPE, which arise from its rotation-based structure, such as its structured dependence on relative position. Extending PaTH with compositions of reflections to approximate rotations, similar in spirit to DeltaProduct \cite{deltaproduct}, is an interesting direction for future work. 

\vspace{-2mm}
\section{Conclusion}
\vspace{-2mm}

This work introduces PaTH, a new data dependent multiplicative position encoding scheme that provably enhances the expressive power of Transformers. We develop a FlashAttention style blockwise algorithm to enable efficient parallel training. Experiments show that PaTH consistently outperforms RoPE across multiple benchmarks, with particularly strong gains on state tracking tasks and length extrapolation.

\section*{Acknowledgements}
This study was supported in part by the  AI2050 program at Schmidt Sciences (Grant G-25-67980), MIT-IBM Watson AI Lab, and the CSAIL Felicis Research Program. We also thank Zhixuan Lin for helpful discussions.

\bibliographystyle{abbrvnat}
\bibliography{colm2025_conference,references}

\appendix
\newpage

\section{Representation Power of  Transformers with PaTH Attention}
\label{app:proof}
We state two theorem which illustrate the representation power of transformers equipped with PaTH attention.

The first theorem shows that a PaTH attention layer can solve the problem of tracking iterative swaps on 5 elements, which is an  $\mathsf{NC}^1$-complete under $\mathsf{AC}^0$ reductions. This theorem and its proof is an adaptation of  Theorem 2 of~\citet{peng2025rwkv7gooseexpressivedynamic}.

\path*


\begin{proof}
    As in Lemma 2 of \citet{peng2025rwkv7gooseexpressivedynamic}, consider the task of deciding whether $n$ iterative swappings of 5 elements encodes the identity permutation. This task consists of an input sequence $c = c_0c_1\dots c_{n}$ of length $n+1$,
\begin{align*}
{\# \,\,\, [a_1 \leftrightarrow b_1] \,\,\,  [a_2 \leftrightarrow b_2]\,\,\,  \ldots \,\,\,  [a_n \leftrightarrow b_n]}
\end{align*}
where $c_0 = \#$ is the start token and $c_1 = [a_1 \leftrightarrow b_1], \dots, c_n = [a_n \leftrightarrow b_n] $ are ``tokens'' which indicates that position $a_n$ is swapped with position $b_n$ at time $n$. (Hence there are 20 such possible swap tokens of the form  $[x \leftrightarrow y]$ for all pairwise $x,y \in \{1, \dots, 5\}$ such that $x \ne y$.) Given this sequence, we show that there is a one-layer PaTH transformer with two attention heads that outputs a $1$ if the sequence encodes the identity permutation, and $-1$ otherwise. As noted by previous works \citep{merrill_illusion_2024,peng2025rwkv7gooseexpressivedynamic}, this suffices since there is an $\mathsf{AC}^0$-reduction from a well-known $\mathsf{NC}^1$-complete problem (i.e., iterated multiplication of $S_5$)  to this task.

We first embed the $\#$ and all 20 $[x \leftrightarrow y]$ tokens to distinct one-hot vectors. Given a token $u \in \Sigma$ and its associated one-hot vector $\rvu$, we choose the key/query/value/PaTH projection matrices (i.e., $\mathbf{W}_k, \mathbf{W}_q, \mathbf{W}_v, \mathbf{W}_w \in \mathbb{R}^{6 \times 21}$) matrices for the first attention head such that
\begin{align*}
   & \mathbf{W}_k \mathbf{u} = \rvk[u] = \mathbf{1}\{u = \#\} (\rve_1 + 2\rve_2 + 3\rve_3 + 4\rve_4 + 5\rve_5 - \rve_6),   \\
   &    \mathbf{W}_q \mathbf{u}  = \rvq[u] = n(\rve_1 + 2\rve_2 + 3\rve_3 + 4\rve_4 + 5\rve_5 + 54.5 \rve_6), \\
   &   \mathbf{W}_w \mathbf{u}  =  \rvw[u] = (\rve_x - \rve_y)/\sqrt{2} \text{ for }  v = [x \leftrightarrow y], \text{ and  } \mathbf{0}  \text{ if } v = \#, \\
     &  \mathbf{W}_v \mathbf{u} =  \rvv[u] = \mathbf{1}\{u = \#\} \rve_1 \\
    & \beta = 2. 
\end{align*}
(Hence, the query vectors and $\beta$ are input-independent.) 
In this case, as in Lemma 1 of \cite{peng2025rwkv7gooseexpressivedynamic} the one-step PaTH transformation is a true Householder transformation with
\begin{align*}
    \mathbf{H}[u] = \mathbf{I} - 2 \mathbf{w}[u] \mathbf{w}[u]^\top & \in \mathbb{R}^{6 \times 6}
\end{align*}
and effectively swaps $x$ with $y$. Now suppose the initial list is $[1,2,3,4,5]$, and let $\pi(i)$ be the $i$-th element of the final permuted list after the $n$ swaps. We then have
\begin{align*}
   (\mathbf{k}[c_0]^\top  \prod_{s=1}^{n} \mathbf{H}_s) = \left(\left(\sum_{i=1}^5 i\rve_{\pi(i)}\right) - \rve_{6} \right)^\top,
\end{align*}
and the attention logit from $n$ to $0$   is given by 
\begin{align*}
    s_{0} = \mathbf{k}[c_0]^\top \prod_{s=1}^n \mathbf{H}_s \mathbf{q}[c_n] = n\left(\sum_{i=1}^5 i \pi(i) - 54.5\right). 
\end{align*}
By the rearrangement inequality, we further have 
\begin{align*}
    \sum_{i = 1}^5 i \pi(i) \le \sum_{i = 1}^5 i^2 = 55, 
\end{align*}
 with equality holding if and only if $i =\pi(i)$ for all $i$. Therefore $s_{0} > 0.5n$ if the final list is the same as the initial list (i.e., identity permutation), and $s_0 < -0.5n$ otherwise. Because $\mathbf{k}[u] = \mathbf{0}$ for all $u \ne \#$, we further have that the attention logits $s_{l}$ for all $l > 0$ is 0. The attention weight for the first position is then given by $a_0 = \frac{\exp(s_0)}{\exp(s_0) + n}$, which is greater than $\frac{1}{n+1}$ if  $s_0 > 0$  (i.e., permutation is identity) and less than $\frac{1}{n+1}$ otherwise. Since the value vector is $\rve_1$ for $c_0$ and $\mathbf{0}$ otherwise, the output of this attention head is given by
 \begin{align*}
     \sum_{l=0}^{n} a_l \rvv[c_l] =  \frac{\exp(s_0)}{\exp(s_0) + n}\rve_1.
 \end{align*}
 The second attention head is  data-independent and uses $\mathbf{W}_k = \mathbf{W}_q = \mathbf{W}_w =\mathbf{0}$, and the same value matrix $\mathbf{W}_v$ as above. This results in the output of this second attention head always being $\frac{1}{n+1}\mathbf{e}_1$ regardless of the input. Concatenating the output from these two heads gives the vector
 \begin{align*}
     \left[ \frac{\exp(s_0)}{\exp(s_0) + n}, 0, 0, 0,0,0,\frac{1}{n+1},0,0,0,0,0\right],
 \end{align*}
i.e., 12 dimension vector with the first dimension as $\frac{\exp(s_0)}{\exp(s_0) + n}$ and the 7th dimension as $\frac{1}{n+1}$. We can now have an output projection layer with matrix $\mathbf{W}_{o}$ that subtracts the 7th dimension from the 1st dimension (i.e., $[1,0,0,0,0,0,-1,0,0,0,0,0]$ in the first row). The first dimension of this output vector will be positive if the permutation is identity, and negative otherwise. We can then use the FFN layer with a $\operatorname{sign}(\cdot)$ nonlinearty (or a steep tanh  function) to clamp this output to $\{-1,+1\}$.

We do not explicitly need the $\log n$ precision assumption here but the construction here can be represented in $\log n$ precision while preserving the same functionality. We include this assumption to ensure that we are using same or weaker precision assumption with previous works on the circuit complexity of transformers (\citet{merrill2023logicexpressinglogprecisiontransformers,chen2024circuitcomplexityboundsropebased} and refs. therein). We can make the proof simpler in the above if we incorporate a $O(\log n)$ assumption since in this case the output of softmax  is $1$ when the final list is the same as the original list and is $0$ otherwise (i.e., there is no need for the second attention head). 
\end{proof}

\begin{theorem}
\label{thm:fflm}
For any  $n$, there is a two-layer PaTH transformer with $O(\log n)$ precision can solve the flip-flop language modeling (FFLM) task with accuracy greater than $1 - 1/n^{100}$ for all inputs up to length $n$.
\end{theorem}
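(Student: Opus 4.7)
The plan is to construct a two-layer PaTH transformer whose first layer tags each bit token according to whether its immediately preceding action is \texttt{w}, \texttt{r}, or \texttt{i}, and whose second layer uses PaTH's data-dependent Householder transitions to focus attention from each read position sharply onto the most recent write-value. A final linear read-out then decodes the attended bit. The proof is essentially a mechanical construction paralleling the argument for Theorem~\ref{thm:nc1}, followed by a precision/union-bound accuracy analysis.

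For layer~1, I would exploit the fact that FFLM inputs are alternating sequences of action and bit tokens, so for any bit position $i$ the immediately preceding action lies at position $i-1$. To implement ``attend to the most recent action'' via PaTH, I set $\beta_s = 1$ and $\mathbf{w}_s = \mathbf{e}_1$ whenever position $s$ is an action, and $\beta_s = 0$ (so $\mathbf{H}_s = \mathbf{I}$) otherwise; the short convolution of filter size $3$ in the PaTH projection has enough local context to realize this data-dependent choice. Then $\prod_{s=j+1}^i \mathbf{H}_s$ equals the identity when no action lies strictly between $j$ and $i$, and annihilates the $\mathbf{e}_1$-component otherwise. Placing the action-type signal along $\mathbf{e}_1$ in the keys at action positions (scaled by $\Theta(\log n)$) and zero keys at bit positions makes the softmax at a bit position $i$ peak sharply on $i-1$; the value vector at action positions carries a one-hot code for the action type, so after the attention plus FFN each bit position has a representation that identifies whether it is a write-value, read-value, or ignore-value, with the bit value carried forward through the residual stream.

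For layer~2, I would set $\beta_s \in (0,2)$ bounded away from the endpoints (say $\beta_s = 1$) and $\mathbf{w}_s$ a fixed unit vector exactly at write-value positions, and $\beta_s = 0$ elsewhere, using the tags produced by layer~1 to decide when to activate the Householder. Let the key at a write-value position have a large component along $\mathbf{w}_s$ with the bit value encoded orthogonally, and let the query at the read position be aligned with $\mathbf{w}_s$ and scaled by a factor of order $C \log n$. Then for the most recent write-value $j$ the product $\prod_{s=j+1}^i \mathbf{H}_s$ is the identity and the logit is $\Theta(\log n)$; for any earlier write-value the product contains at least one non-trivial Householder, and the logit is contracted by $|1-\beta_s|<1$ per intervening write-value. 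Non-write positions have zero key. Choosing $C$ large enough makes the softmax place mass at least $1 - n^{-101}$ on the most recent write-value, so the attended value vector decodes to the correct bit; a union bound across the at most $n$ read positions in any input of length $\leq n$ yields accuracy exceeding $1 - 1/n^{100}$.

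The main obstacle is keeping the attention sharp against $O(n)$ competing positions while respecting the $O(\log n)$ precision budget, since the geometric attenuation of PaTH's Householder product must dominate the linear growth in the number of distractors and must not be washed out by rounding. I handle this by tuning $C$ and $\beta_s$ so that a single intervening write-value already creates a logit gap of $\Omega(\log n)$, and by verifying that with $c \log n$ bits of fixed-point arithmetic (for a sufficiently large constant $c$) the accumulated multiplicative error in each $\prod_{s=j+1}^i \mathbf{H}_s$ and in each softmax normalizer stays well below the additive logit gap. Edge cases---read positions that occur before any write, and boundary effects of the short convolution near position $1$---are handled by prepending a virtual sentinel write-value at the start of the sequence so that the attention distribution is always well-defined.
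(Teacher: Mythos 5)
Your proposal is correct and takes essentially the same route as the paper's proof: a first layer whose Householder transitions annihilate the key direction at every instruction token, forcing each bit token to attend one-hot to its immediately preceding instruction (thereby tagging write-values), and a second layer whose Householders are activated exactly at write-value positions so the query attends one-hot to the most recent write, with the bit decoded by a final read-out. The only differences are cosmetic: the paper scales logits by $n$ and treats the softmax as exactly one-hot at $O(\log n)$ precision (and spells out the FFN/LM-head bookkeeping, relying on the first token being \texttt{w} rather than a sentinel), whereas you use $C\log n$ logit scaling with an explicit error and union-bound analysis.
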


\begin{proof}

Recall that in FFLM, there are five types of input \texttt{w, i, r, 0, 1}. We will now present a construction of the two-layer transformer with PaTH attention.

The token embeddings are given by 
\begin{align*}
    &\operatorname{emb}(\texttt{w}) = \rve_1 + \rve_6 \\
    &\operatorname{emb}(\texttt{r}) = \rve_2 + \rve_6 \\
    & \operatorname{emb}(\texttt{i}) = \rve_3 + \rve_6 \\
    &\operatorname{emb}(\texttt{0}) = \rve_4 + \rve_6 \\
    &\operatorname{emb}(\texttt{1}) = \rve_5 + \rve_6
\end{align*}
where $\rve_i$ is the one-hot $i$-th basis vector. 

{The first attention layer} will implement a one-hot attention from the bit tokens \texttt{0} and \texttt{1} to their corresponding instruction tokens. To achieve this, we will have the matrices $ \mathbf{W}_k, \mathbf{W}_q, \mathbf{W}_w,\mathbf{W}_v$ such that:
\begin{align*}
   & \mathbf{W}_k \mathbf{h} = (h_1 + h_2 + h_3) \rve_1,  \\
   &    \mathbf{W}_q \mathbf{h}  = n h_6 \rve_1, \\
   &   \mathbf{W}_w \mathbf{h}  = (h_1 + h_2 + h_3) \rve_1 + (h_4 + h_5)\rve_2 , \\
     &  \mathbf{W}_v \mathbf{h} =  h_1 \rve_7 + h_2 \rve_8 + h_3 \rve_9,  \\
    & \beta = 1. 
\end{align*}
Then the transition matrix is given by
\begin{align*}
    \mathbf{H} = \begin{cases}
        \mathbf{I} - \rve_1 \rve_1^\top, \text{if input is \{\texttt{w}, \texttt{r}, \texttt{i}\} } \\
        \mathbf{I} - \rve_2 \rve_2^\top, \text{if input is \{\texttt{0}, \texttt{1}\}},
    \end{cases}
\end{align*}
i.e.,  the transition matrix projects the first dimension to $0$ for the instruction tokens \{\texttt{w}, \texttt{r}, \texttt{i}\}  and projects the second dimension to $0$ the bit tokens \{\texttt{0}, \texttt{1}\}. Similarly, the key vector $\rvk_i$ is $\rve_1$ if the $i$-th token is an instruction token, and $\mathbf{0}$ otherwise. Therefore when the $i$-th token is $\texttt{0}$ or $\texttt{1}$,
\begin{align*}
     \mathbf{k}_j^\top \prod_{s = j + 1}^i \mathbf{H}_s \mathbf{q}_i \neq 0
\end{align*}
if and only if $j = i - 1$, and in this case it equals to $n$. Because we are considering an $O(\log n)$ precision transformer, the attention score after softmax becomes 1-hot for every bit token. After this attention layer, the $7$-th to $9$-th  dimension of the bit tokens now encode the type of instruction of the previous token.

{The first FFN layer} will map the $1$ to $9$ dimensions of \texttt{0} and \texttt{1} tokens to be a one-hot embedding for each value and corresponding instruction type,  
\begin{align*}
    \mathrm{FFN}(\rvh)_{i} &= 0, i \not \in \{10, 11, 12\}, \\
    \mathrm{FFN}(\rvh)_{10} &= \textbf{1}\{h_4 = 1, h_{7} = 1\},  \\
    \mathrm{FFN}(\rvh)_{11} &= \textbf{1}\{h_{5} = 1, h_{7} =  1\}, \\
    \mathrm{FFN}(\rvh)_{12} &= 1, \text{ otherwise}.
\end{align*}
With $\textbf{1}\{\cdot\}$ being the indicator function.
Specifically, the 10-th dimension will be $1$ for every \texttt{0} following a \texttt{w} and the 11-th dimension will be $1$ for every  \texttt{1} following a \texttt{w}.

{The second attention layer} will operate on the $10$-th and $11$-th dimensions of  the input embedding and implement the following:
\begin{align*}
    &\mathbf{W}_k \mathbf{h} = (h_{10} + h_{11}) \rve_1 \\
    &\mathbf{W}_q \mathbf{h} = n h_6 \rve_1 \\
    &\mathbf{W}_w \mathbf{h} =  (h_{10} + h_{11}) \rve_1 + h_{12} \rve_2 \\
    &\mathbf{W}_v \mathbf{h}= h_8 \rve_{13} +  h_9 \rve_{14} \\
        &\beta(\mathbf{h}) = \mathbf{1}\{h_8 + h_9 > 0\}
\end{align*}
Here we assume that we can use a step function for $\beta$ (or alternatively, we can use a steep-enough logistic function for it to be effectively a step function under the precision considered). 
This shows that for every token that is not a $\texttt{0}$ or $\texttt{1}$ that follows $\texttt{w}$, the transition matrix is identity; for $\texttt{0}$ or $\texttt{1}$ that follows $\texttt{w}$, the transition matrix is a matrix that projects the first dimension to $0$. Then for any $i \ge 2$, 
\begin{align*}
    \rvk_j^\top \prod_{s = j + 1}^i \mathbf{H}_s \rvq_i \neq 0
\end{align*}
if and only if $j$ is the largest token that is a $\texttt{0}$ or $\texttt{1}$ that follows $\texttt{w}$ with $j \le i$. This $j$ is guaranteed to exist because in FFLM, the first token is always $w$. In this case, this term equals $n$. Using the same argument as the first layer, the attention becomes one-hot and the output of attention encode the value of last $0$ or $1$ token following a $w$. By the definition of flip-flop, this is the current state.

{The second FFN layer} will operate on the 13-th and 14-th dimensions of the  input,
\begin{align*}
    \mathrm{FFN}(\rvh)_{i} &= 0, i \not \in \{15, 16\}, \\
    \mathrm{FFN}(\rvh)_{15} &= \1(h_2 = 1, h_6 = 1),  \\
    \mathrm{FFN}(\rvh)_{16} &= \1(h_2 = 1, h_6 = 1).
\end{align*}
Specifically, the 15-th and 16-th dimension of the output will encode the state value for each $r$ token. After this layer, 
dimensions 1, 3, 4, 5, 15, and 16 of the embedding becomes one-hot, each corresponding to a different output distribution in FFLM.

Finally, the LM head will map dimensions 1, 3, 4, 5, 15, and 16  to their corresponding next-token probability before softmax. Concretely, 
\begin{align*}
    \mathbf{W}_{LM} \rvh =& (T \rve_4 + T \rve_5)(h_1 + h_3) \\
    +& (T \rve_1 + T \rve_2 + T \rve_3)(h_4 + h_5) \\
    +& n \rve_4 h_{15} + n \rve_5 h_{16}.
\end{align*}

Here $T \approx \log n$ is an appropriate number such that softmax over $T \rve_4 + T \rve_5$ and $T \rve_1 + T\rve_2 + T \rve_3$ yields a uniform distribution with error smaller than $1/n^{101}$.
\end{proof}

\section{Experimental Setup \& Additional Results}
\label{sec:appendix_experiment}
\begin{figure}[h!]
    \centering
    \includegraphics[width=\linewidth]{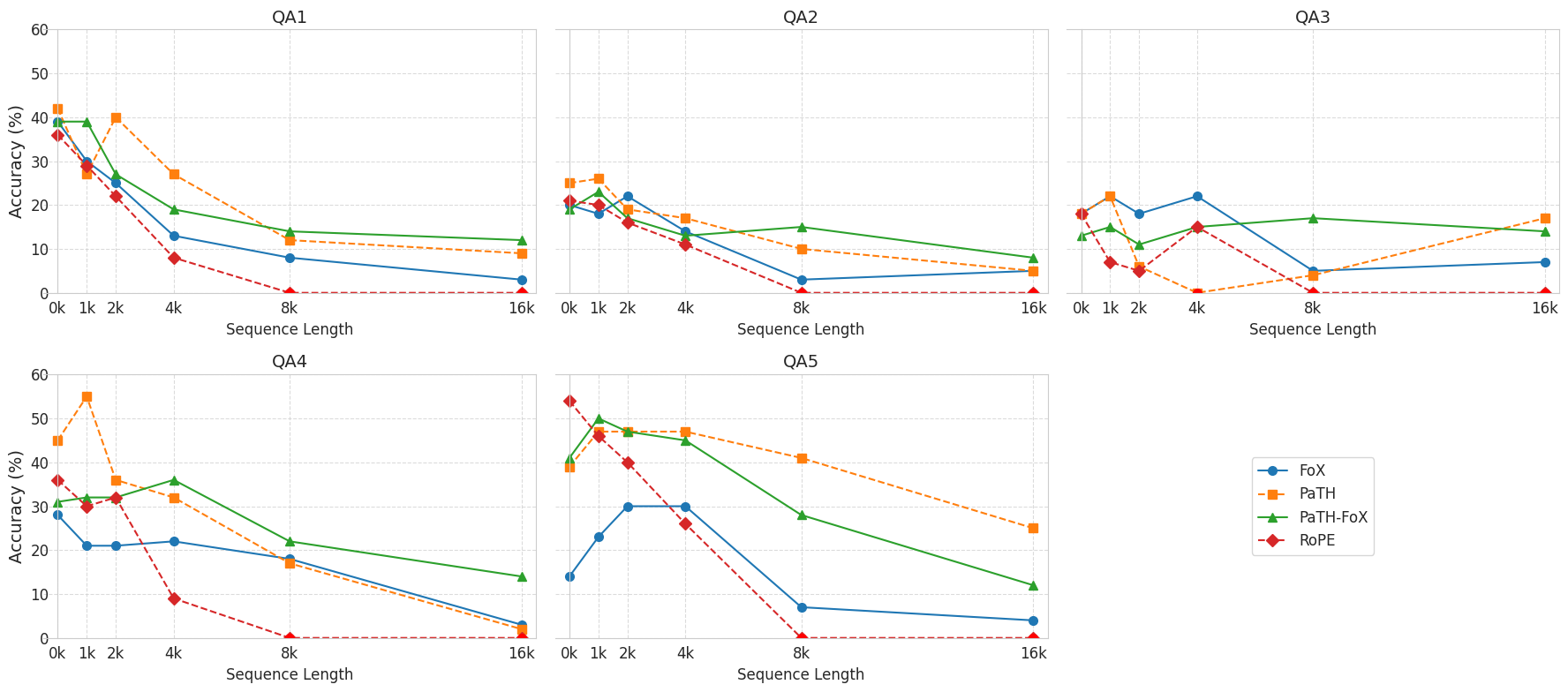}
    \caption{BABILong performance breakdowns. QA1: Single supporting fact. QA2: Two supporting facts. QA3: Three supporting facts. QA4: Two arg relations. QA5: Three arg relations.}
    \label{fig:babilong-by-task-and-length}
\end{figure}

\begin{table}[t]
\centering
\small
\setlength{\tabcolsep}{6pt}
\renewcommand{\arraystretch}{1.2}
\begin{tabular}{@{}p{0.16\textwidth}p{0.55\textwidth}p{0.24\textwidth}@{}}
\toprule
\textbf{Task} & \textbf{Example} & \textbf{Evaluation Focus} \\
\midrule

Task 1: Single Supporting Fact &
\texttt{Mary went to the bathroom.} \newline
\texttt{John moved to the hallway.} \newline
\texttt{Mary travelled to the office.} \newline
\textbf{Q: Where is Mary? A: office} &
Identify a single explicit fact from context. \\

Task 2: Two Supporting Facts &
\texttt{John is in the playground.} \newline
\texttt{John picked up the football.} \newline
\texttt{Bob went to the kitchen.} \newline
\textbf{Q: Where is the football? A: playground} &
Combine two clues to infer an object's location. \\

Task 3: Three Supporting Facts &
\texttt{John picked up the apple.} \newline
\texttt{John went to the office.} \newline
\texttt{John went to the kitchen.} \newline
\texttt{John dropped the apple.} \newline
\textbf{Q: Where was the apple before the kitchen? A: office} &
Track object movement and temporal order. \\

Task 4: Two Argument Relations &
\texttt{Office is north of bedroom.} \newline
\texttt{Bedroom is north of bathroom.} \newline
\texttt{Kitchen is west of garden.} \newline
\textbf{Q1: What is north of bedroom? A: office} \newline
\textbf{Q2: What is bedroom north of? A: bathroom} &
Reason over spatial relationships. \\

Task 5: Three Argument Relations &
\texttt{Mary gave the cake to Fred.} \newline
\texttt{Fred gave the cake to Bill.} \newline
\texttt{Jeff was given the milk by Bill.} \newline
\textbf{Q1: Who gave the cake to Fred? A: Mary} \newline
\textbf{Q2: Who did Fred give the cake to? A: Bill} &
Transitive reasoning over possession chains. \\
\bottomrule
\end{tabular}
\caption{Descriptions and examples of the first five bAbI tasks. Each task highlights a specific reasoning skill required for successful question answering.}
\label{table:babi}
\end{table}

\paragraph{Hyperparameter settings.}
All models are trained with AdamW~\citep{loshchilov2018fixing}, using a cosine learning rate schedule with a 1B-token warmup. The peak learning rate is 1e-3, with both initial and final rates set to 3e-5. We apply a weight decay of 0.01 and gradient clipping of 1.0. The batch size is 2M tokens. Parameters are initialized with a standard deviation of 0.02. Each 760M model is trained on 8 H100 GPUs for 2-3 days. For synthetic tasks, we use A100 GPUs, completing training within several hours.

\paragraph{BABILong}

Figure~\ref{fig:babilong-by-task-and-length} presents the performance breakdown across sub-tasks and sequence lengths. Task descriptions are provided in Table~\ref{table:babi}.

\paragraph{LongBench-E} Detailed results are presented in Table~\ref{table:longbench-e}.
\begin{table}[h]
\centering
\scriptsize
\setlength{\tabcolsep}{2.5pt}
\begin{tabular}{llcccc|cccc|cccc}
\toprule
\multirow{2}{*}{Category} & \multirow{2}{*}{Dataset} 
& \multicolumn{4}{c}{0–4k} 
& \multicolumn{4}{c}{4–8k} 
& \multicolumn{4}{c}{8k-16k} \\
\cmidrule(lr){3-6} \cmidrule(lr){7-10} \cmidrule(lr){11-14}
& & FoX & FoX-PaTH & PaTH & RoPE & FoX & FoX-PaTH & PaTH & RoPE & FoX & FoX-PaTH & PaTH & RoPE \\
\midrule
\multirow{4}{*}{QA}
& \texttt{2wikimqa}         & 21.0 & 23.7 & \textbf{28.7} & 23.9 & 15.3 & \textbf{22.5} & 20.8 & 0.9 & \textbf{9.4} & 8.4 & 7.3 & 0.1 \\
& \texttt{hotpotqa}         & 20.3 & 16.2 & 19.0 & \textbf{25.2} & 9.3 & 16.1 & \textbf{22.8} & 0.8 & 5.6 & 7.7 & \textbf{8.8} & 0.4 \\
& \texttt{multifieldqa\_en} & 39.1 & \textbf{39.6} & 38.6 & 18.0 & 24.9 & \textbf{31.4} & 27.2 & 5.1 & 16.0 & \textbf{19.5} & 19.2 & 1.9 \\
& \texttt{qasper}           & 22.4 & 24.6 & \textbf{25.9} & 15.1 & 14.9 & \textbf{19.8} & 16.8 & 1.8 & 7.0 & 10.1 & \textbf{10.6} & 1.9 \\
\midrule
\multirow{2}{*}{Summarization}
& \texttt{multi\_news}      & 9.1 & 6.9 & \textbf{12.1} & 10.2 & 7.3 & \textbf{9.8} & 9.6 & 3.1 & 6.1 & \textbf{8.3} & \textbf{8.3} & 1.7 \\
& \texttt{gov\_report}      & 14.4 & 10.2 & \textbf{22.3} & 12.4 & 14.5 & 13.6 & \textbf{17.9} & 4.9 & 5.9 & \textbf{11.9} & 11.6 & 2.5 \\
\midrule
\multirow{3}{*}{Few-shot}
& \texttt{trec}             & 35.0 & 36.7 & \textbf{40.0} & 23.3 & 27.5 & 26.3 & \textbf{35.0} & 1.2 & 20.6 & \textbf{26.3} & 20.0 & 0.0 \\
& \texttt{triviaqa}         & 33.2 & 28.9 & \textbf{36.0} & 21.8 & 18.2 & 27.6 & \textbf{32.0} & 2.8 & 13.7 & \textbf{31.6} & 18.4 & 0.4 \\
& \texttt{samsum}           & 21.4 & \textbf{27.1} & 26.8 & 19.3 & 16.9 & \textbf{27.6} & 23.6 & 3.2 & 9.1 & \textbf{15.7} & 15.6 & 0.7 \\
\midrule
\multirow{2}{*}{Code}
& \texttt{lcc}              & 19.2 & 21.4 & \textbf{22.3} & 22.1 & 18.8 & \textbf{23.3} & 18.6 & 7.9 & 18.2 & 18.9 & \textbf{19.0} & 4.8 \\
& \texttt{repobench-p}      & 21.8 & 22.7 & \textbf{27.3} & 14.6 & 18.4 & 22.5 & \textbf{22.7} & 9.2 & 17.5 & \textbf{19.3} & 19.2 & 7.6 \\
\midrule
\textbf{Average} & ~ & 23.4 & 23.5 & \textbf{27.2} & 18.7 & 16.9 & 21.9 & \textbf{22.5} & 3.7 & 11.7 & \textbf{16.1} & 14.4 & 2.0 \\
\bottomrule
\end{tabular}
\vspace{1mm}
\caption{Performance comparison grouped by task category. Each bolded value indicates the best model score for the respective dataset and length bucket.}
\label{table:longbench-e}
\end{table}

\end{document}